\PassOptionsToPackage{svgnames,x11names,table}{xcolor}
\documentclass{article}

\usepackage{microtype}
\usepackage{graphicx}
\usepackage{graphics}
\usepackage{booktabs} 

\usepackage{hyperref}



\usepackage[accepted]{icml2025}

\usepackage{amsmath}
\usepackage{amssymb}
\usepackage{mathtools}
\usepackage{amsthm}

\usepackage{amsfonts}
\usepackage{bm}

\theoremstyle{plain}
\newtheorem{theorem}{Theorem}[section]

\newtheorem{lemma}[theorem]{Lemma}
\newtheorem{corollary}[theorem]{Corollary}
\theoremstyle{definition}
\newtheorem{definition}[theorem]{Definition}
\newtheorem{assumption}[theorem]{Assumption}
\theoremstyle{remark}









\def\eqref#1{equation~\ref{#1}}









\def\1{\bm{1}}










\DeclareMathAlphabet{\mathsfit}{\encodingdefault}{\sfdefault}{m}{sl}
\SetMathAlphabet{\mathsfit}{bold}{\encodingdefault}{\sfdefault}{bx}{n}















\def\N{\mathbb{N}}

 %

\usepackage[capitalize,noabbrev]{cleveref}

\usepackage{tabularx}
\usepackage{subcaption}
\usepackage{multirow}
\usepackage{colortbl}
\usepackage{xspace}
\usepackage{cancel}
\usepackage{natbib}

\usepackage{times}
\usepackage{soul}
\usepackage{url}
\usepackage[utf8]{inputenc}
\usepackage{graphicx}
\usepackage{amsmath}
\usepackage{amsthm}
\usepackage[switch]{lineno}

\usepackage{nicematrix}

\usepackage{wrapfig}
\usepackage{colortbl}
\usepackage{xcolor}
\usepackage{float}
\usepackage{framed}
\usepackage{hhline}
\usepackage[T1]{fontenc}

\usepackage{eqparbox,array}
\usepackage{makecell}

\definecolor{lgray}{rgb}{.996, .992, .992}

\hypersetup{
    colorlinks = true,
    linkcolor = red,
    anchorcolor = red,
    urlcolor = MediumBlue,
    citecolor = MediumBlue,
}

\makeatletter
\DeclareRobustCommand\onedot{\futurelet\@let@token\@onedot}
\def\@onedot{\ifx\@let@token.\else.\null\fi\xspace}

\def\eg{\emph{e.g}\onedot} 
\def\ie{\emph{i.e}\onedot} 
 
\def\etc{\emph{etc}\onedot}

\makeatother

\def\safe{\textsc{Safe}\xspace}

\newcommand\safep[1]{\textsc{Safe$^+_\text{#1}$}\xspace}
\def\hatl{\hat{\mathcal{L}}}

\usepackage{xparse}
\NewDocumentCommand{\alignednum}{m o}{%
  \makebox[5em][c]{%
    \makebox[0pt][r]{#1}%
    \makebox[0pt][l]{%
        \IfValueTF{#2}
          {$_{\pm#2}$}
          {\phantom{$_{\pm0.00}$}}
    }
  }%
}

\usepackage[textsize=tiny]{todonotes}


\begin{document}

\twocolumn[
\icmltitle{\safe: Finding Sparse and Flat Minima to Improve Pruning}




\begin{icmlauthorlist}
\icmlauthor{Dongyeop Lee}{postech}
\icmlauthor{Kwanhee Lee}{postech}
\icmlauthor{Jinseok Chung}{postech}
\icmlauthor{Namhoon Lee}{postech}
\end{icmlauthorlist}

\icmlaffiliation{postech}{POSTECH, South Korea}

\icmlcorrespondingauthor{Dongyeop Lee}{dylee23@postech.ac.kr}

\icmlkeywords{Pruning, Constrained optimization, Sharpness minimization, Machine Learning, Deep Learning, ICML}

\vskip 0.3in
]



\printAffiliationsAndNotice{}  

\begin{abstract}
Sparsifying neural networks often suffers from seemingly inevitable performance degradation, and it remains challenging to restore the original performance despite much recent progress.
Motivated by recent studies in robust optimization, we aim to tackle this problem by finding subnetworks that are both sparse and flat at the same time.
Specifically, we formulate pruning as a sparsity-constrained optimization problem where flatness is encouraged as an objective.
We solve it explicitly via an augmented Lagrange dual approach and extend it further by proposing a generalized projection operation, resulting in novel pruning methods called \safe and its extension, \safep{}.
Extensive evaluations on standard image classification and language modeling tasks reveal that \safe consistently yields sparse networks with improved generalization performance, which compares competitively to well-established baselines.
In addition, \safe demonstrates resilience to noisy data, making it well-suited for real-world conditions.
\end{abstract}
\section{Introduction}

Over the past decades, the emergence of computers and the accumulation of digital data have made machine learning a viable tool for everyday applications. 
However, modern machine learning systems have also grown in complexity with the rapid advances in hardware and databases, demanding significant computational and memory resources. 
This has led to a surge of interest in strategies to reduce these substantial computational costs. 

One major approach is sparsification, which aims to find solutions with mostly zero entries.
This has been studied for many years in various large-scale applications in machine learning \citep{lecun1989optimal, hassibi1992second}, signal processing \citep{blumensath2009iht}, and statistics \citep{tibshirani1996lasso, beck2009fast}.
Notably, the recent success of highly overparameterized deep neural networks in achieving human-level computer vision and natural language processing tasks has drastically scaled these models to proportions never seen before, which has spurred considerable research on sparsifying neural networks \cite{hoefler2021sparsity}.
Since the pioneering work of \citet{han2015learning}, many works have proposed to remove redundant parameters by various tactics ranging from those discussed in the survey work of \citet{hoefler2021sparsity} to their applications to large foundation models \citep{kwon2022fast, frantar2023sparsegpt, sunsimple}.
However, it is witnessed that excessive pruning usually results in performance decline due to the reduced capacity, potentially impairing deep learning models from handling tasks of high complexity.
Is there a way to restore this performance loss?

We attend to recent studies that have demonstrated that the generalization performance of a model is closely linked to the flatness of the solution landscape \cite{keskar2017on,jiang2019fantastic}. 
This insight has led to the development of techniques such as sharpness-aware minimization (SAM) \cite{foret2020sharpness}, which explicitly regularizes the sharpness of the solution during the optimization process. 
SAM has been shown to deliver exceptional performance across various domains \citep{chenvision, bahri2022sharpness} and has also demonstrated robustness to label noise \citep{baek2024sam}.
In light of the SAM's success, there has been growing interest in applying these techniques to model pruning. 
Research by \citet{na2022train} has explored how sharpness-aware training affects model compressibility, while \citet{peste2022cram} and \citet{bair2023adaptive} have examined different strategies to alter this process for pruning, with the goal of enhancing performance and robustness.
Nevertheless, the exploration of sharpness-aware techniques within the context of sparsification is still at an early stage. 
We believe there is considerable potential to integrate these approaches more effectively into the sparsification process, which could lead to significant advancements in the development of efficient and robust deep learning models.

In this work, we aim to achieve exactly that, by proposing to frame it as a sharpness-aware sparsity-constrained optimization problem to simultaneously consider both sharpness and sparsity during the training process.
To tackle this, we use an augmented Lagrangian dual-based approach well established in the optimization literature with convergence guarantees, and propose a new optimization-based pruning method called \safe.
We evaluate \safe across standard benchmark tasks in image classification and large language model post-training pruning, and compare with existing alternatives to demonstrate that (i) it induces flatness on the sparse solutions, (ii) yielding performance improvements in the resulting sparse models, and (iii) its robustness to label noise, common image noise, and adversarial noise, and its (iv) effectiveness compared to similar sharpness-minimization-inspired pruning techniques.
These aspects support the effectiveness and generality of our method, and we conclude by discussing current limitations and potential ideas for future work.

\section{Background}

\subsection{Sparsity}
Throughout the years, various techniques to induce sparsity in machine learning systems have been developed to simplify models for efficiency, interpretability, and generalization.
This is usually framed as solving the following optimization problem with sparsity constraint:
\begin{equation} \label{eq:opt}
    \min_{\|x\|_0 \leq d} f(x),
\end{equation}
where $f$ is the objective we wish to minimize, $x$ is the optimization variable, $\|x\|_0$ denotes the $L_0$-norm, which counts the non-zero entries within $x$, and $d$ is the number of parameters we wish to preserve.
The goal is to find a solution $x$ with desired sparsity that minimizes $f$.
Exactly solving this optimization problem is challenging due to the combinatorial nature of the $L_0$-norm, as it requires an exhaustive search over all possible configurations of zeros within $x$.
Consequently, several approaches have been proposed, such as relaxing the $L_0$ norm as in LASSO \citep{tibshirani1996lasso}, or employing advanced optimization techniques, including proximal methods like FISTA \citep{beck2009fast} and iterative hard thresholding \citep{blumensath2009iht}.
Additionally, strategies like optimal brain damage and surgeon \citep{lecun1989optimal, hassibi1992second} have been explored to sparsify multi-layer perceptrons through second-order approximations of the objective function.

The recent success of increasingly large deep neural networks has further accelerated this trend, spurring the development of various methods to sparsify neural networks at different stages of training, each offering distinct advantages depending on the scenario \citep{hoefler2021sparsity}.
For instance, pruning before training \citep{leesnip, tanaka2020pruning, wangpicking} is advantageous for improving computational efficiency during training by enabling sparse training, while post-training pruning \cite{frantar2023sparsegpt, sunsimple, kwon2022fast} is ideal for enhancing inference efficiency in pre-trained models, particularly when the training process can be too costly due to large-scale data or complex models such as large language models (LLM).
A widely adopted strategy in post-training pruning is layer-wise reconstruction error minimization, which ensures that the pruned model maintains accuracy by preserving layer-wise output approximations \citep{frantar2023sparsegpt, sunsimple, meng2024alps}. This approach enables efficient pruning of large models by solving smaller subproblems independently for each layer, reducing computational overhead while preserving the signal in model representations. Optimization-based techniques have been proposed to refine this idea further, improving sparsity while minimizing performance degradation. Additionally, extensions of these methods explore structured sparsity patterns, such as block-wise sparsity, to enhance hardware efficiency while maintaining competitive accuracy.
Also, recent studies hint at the possibility of finding an initial random sparse network that can be trained to achieve comparable performance to dense networks, although this generally involves several rounds of expensive training \citep{franklelottery}. 
Among various approaches, this work primarily focuses on pruning during training \citep{peste2021ac, zhou2021effective, kusupati2020soft, lindynamic, sanh2020movement,evci2020rigging}, which is known for achieving best results by guiding the model toward desired sparsity during training \citep{hoefler2021sparsity}, and can be ideal for moderately sized models with adequate computational resources for training. 
Despite this, preserving the original dense performance remains challenging due to the complexity of tasks handled by deep learning models, often leading to the use of heuristics to manage these difficulties.

\subsection{Flat Minima}
In-depth empirical analyses into the optimization properties of deep neural network training, especially with regards to mini-batch training, have revealed a surprising correlation between well-generalizing solutions and their flatness \citep{keskar2017on, jiangfantastic}.
This finding has prompted numerous studies aiming to understand the precise nature of this relationship \citep{andriushchenko2022modern, neyshabur2017exploring, zhou2020towards}, and its impact on neural network pruning, as explored by \citet{leeunderstanding}, who link the challenge of training highly sparse networks to sharper loss landscapes based on an analysis of scaling properties under varying sizes of mini-batches and classical optimization theory.

This also motivated researchers to develop various techniques to explicitly induce flat minima during training \citep{foret2020sharpness, izmailov2018averaging, orvieto2022anticorrelated, chaudhari2017entropysgd}.
Among them, sharpness-aware minimization (SAM) by \citet{foret2020sharpness} aims to tackle this by solving the following min-max optimization problem:
\begin{equation}
    \min_x \max_{\|\epsilon\|_2\leq \rho} f(x+\epsilon),
\end{equation}
where we minimize the objective function over the entire $\epsilon$-neighborhood with radius $\rho$, \ie, seek flat minima.
Solving the inner maximization problem for the first-order Taylor approximation gives the following update rule for SAM:
\begin{equation*}
    x_{t+1} = x_t - \eta \nabla f\left( x_t + \rho \frac{\nabla f(x_t)}{\|\nabla f(x_t)\|_2}\right).
\end{equation*}
This has been shown to be effective in improving generalization performance and robustness across various domains \citep{chenvision, bahri2022sharpness, baek2024sam}.

The success of sharpness minimization techniques has naturally led to exploring their implications for neural network pruning.
\citet{na2022train} studied whether a flatter loss landscape can be more compressible by employing SAM during iterative magnitude pruning for fine-tuning BERT models.
\citet{shin2023critical} observed that the generalization benefits of SAM can be leveraged with sparsity for overparameterized neural networks.
Inspired by SAM, \citet{peste2022cram} proposed compression-aware minimization (CrAM) to minimize the loss increase induced by perturbation from compression, which aims to induce robustness to post-training one-shot pruning of any sparsity. 
\citet{bair2023adaptive} suggested performing additional sharpness-aware training to pre-trained models, with larger perturbations given to coordinates of low importance score (\ie, parameters to be pruned) to incur less loss increase when pruning.

While these efforts represent initial attempts to verify the effectiveness of sharpness minimization or its loosely inspired variants in enhancing pruning, we believe that sharpness minimization can be more effectively integrated into the sparsification process.
Thus, in this work, we aim to weave this sharpness-minimization objective with the sparsification process to enhance the quality of the sparsified network via principled optimization-based approaches that are well established in the literature.

\section{Method} \label{sec:3_method}

In this section, we present a detailed derivation of our flatness-inducing sparsification algorithm \textit{\underline{\emph{S}}parsification via \underline{\emph{A}}DMM with \underline{\emph{F}}latness \underline{\emph{E}}nforcement} or \safe.

\subsection{Problem Formulation}

We begin by proposing the following min-max optimization problem with sparsity constraint:
\begin{equation}\label{eq:flat_sparse_problem}
    \min_{\|x\|_0 \leq d} \max_{\|\epsilon\|_2\leq \rho}f(x+\epsilon),
\end{equation}
where $f$ is the objective function to minimize, $d$ is the number of parameters to preserve, and $\rho$ is the radius of the perturbation $\epsilon$.
Thus, the goal is to find a sparse solution $x^\star$ that minimizes the objective function in the whole $\epsilon$-neighborhood, \ie, seek flat minima.

\subsection{Augmented Lagrangian Based Approach}

A standard approach for solving such constrained optimization problems is to employ Lagrangian duality or projected gradient descent.
However, the discrete nature of $L_0$-norm makes Lagrangian duality infeasible, while projected gradient descent, despite its computational efficiency for $L_0$ constraints, can struggle with highly non-convex objectives in neural network optimization.
To leverage the smooth optimization of Lagrangian and the efficiency of projection, we leverage augmented Lagrangian as described below.

To achieve this, we first employ variable splitting, a widely used trick, usually to separately deal with objective minimization and constraint satisfaction \citep{boyd2011distributed}.
Precisely, instead of directly imposing the sparsity constraint on variable $x$, we first split it into variables $x$ and $z$ as follows:
\begin{align*}
    &\min_{x, z} \max_{\|\epsilon\|_2\leq \rho}f(x+\epsilon) + I_{\|\cdot\|_0\leq d}(z) \hspace{1em}\text{ s.t. } x=z,
\end{align*}
where $I_{\|\cdot\|_0\leq d}(z)$ is an indicator function for the sparsity constraint:
\begin{equation*}
    I_{\|\cdot\|_0\leq d}(z) := \begin{cases}
        0 & \text{if } \|z\|_0\leq d\\
        \infty & \text{else.}
    \end{cases}
\end{equation*}

We then slightly alter the Lagrangian by adding a penalty term $\lambda/2 \|x-z\|^2_2$ with penalty parameter $\lambda$, which preserves equivalence to the original problem while also acting as a proximal term for the projection step.
This alteration is a form of augmented Lagrangian, which we apply to form the Lagrangian dual problem of the following:
\begin{equation*}
    \begin{split}
        \max_u, \min_{x, z} \Big(\mathcal{L}(x, z, u) &:= \max_{\|\epsilon\|_2\leq \rho} f(x+\epsilon) + I_{\|\cdot\|_0\leq d}(z) \\ & \hspace{1em} - \frac{\lambda}{2}\|u\|_2^2 + \frac{\lambda}{2}\|x-z+u\|_2^2 \Big),
    \end{split}
\end{equation*}
where $u$ is a scaled dual variable for the equality constraint scaled by $1/\lambda$.
Here, the projection can be computed efficiently via hard thresholding operation \citep{blumensath2009iht}, which sets all entries of $x$ but the $d$ elements with the largest magnitudes to zero.
Applying dual ascent leaves us with the following $x,z$-minimization and $u$-maximization:
\begin{align*}
    x_{k+1}, z_{k+1}&=\underset{x, z}{\operatorname{argmin}} \max_{\|\epsilon\|_2\leq \rho} f(x+\epsilon) + I_{\|\cdot\|_0\leq d}(z) \\  & \hspace{5em}+ \frac{\lambda}{2}\|x-z+u_k\|_2^2\\
    u_{k+1} &= \underset{u}{\operatorname{argmax}} \frac{\lambda}{2}\|x_{k+1}-z_{k+1}+u\|_2^2 - \frac{\lambda}{2}\|u\|_2^2.
\end{align*}

To minimize each $x$ and $z$ separately with iterative first-order optimization and exact projection operation respectively, we compute $x$ and $z$ in an alternating manner which gives the following iteration:
\begin{align*}
x_{k+1}&=\underset{x}{\operatorname{argmin}} \max_{\|\epsilon\|_2\leq \rho} f(x+\epsilon) + \frac{\lambda}{2}\|x-z_k+u_k\|_2^2 \\
z_{k+1}&=\operatorname{proj}_{\|\cdot\|_0\leq d}(x_{k+1}+u_k)\\
u_{k+1}&=u_k+x_{k+1}-z_{k+1},
\end{align*}
where $\operatorname{proj}_{\|\cdot\|_0\leq d}$ is a projection operation onto the sparsity constraint (\ie, the hard thresholding operator), and $u$-maximization is solved through applying a single step of gradient ascent with a step size of $\lambda$ on $y$, which is to ensure that the iterate stays within feasibility once reached.

\subsection{$x$-minimization}
For the $x$-minimization step, we first approximately solve the $\epsilon$-maximization via first-order approximation of $f$:
\begin{align*}
    \epsilon^\star(x) \approx\underset{\|\epsilon\|_2\leq \rho}{\operatorname{argmax}}~ f(x) + \epsilon^\top \nabla f(x)
    = \rho \frac{\nabla f(x)}{\|\nabla f(x)\|_2},
\end{align*}
which we apply back to the objective
\begin{equation*}
    x_{k+1}=\underset{x}{\operatorname{argmin}}~ f\left(x+\epsilon^\star(x)\right) + \frac{\lambda}{2}\|x-z_k+u_k\|_2^2 .
\end{equation*}
We solve this using gradient descent, where we remove higher-order terms in the gradient as in \citet{foret2020sharpness},
\begin{align} \label{eq:xgrad}
    & \nabla_x \left( f\left(x+\epsilon^\star(x)\right) + \frac{\lambda}{2}\|x-z_k+u_k\|_2^2\right) \nonumber\\
    &=(I+\cancel{\nabla\epsilon^\star(x)})\nabla f(x)\big|_{x+\epsilon^\star(x)} + \lambda (x-z_k+u_k) \nonumber\\
    &=\nabla f\left(x+\rho \frac{\nabla f(x)}{\|\nabla f(x)\|_2}\right) + \lambda (x-z_k+u_k),
\end{align}
thus leading to the following $x$-minimization steps:
\begin{align} \label{eq:x-min}
    x_k^{(t+1)}= x_k^{(t)}-\eta^{(t)}\bigg( & \nabla f\Big( x_k^{(t)}  +\rho \frac{\nabla f(x_k^{(t)})}{\|\nabla f(x_k^{(t)})\|_2}\Big) \nonumber \\
                                      & + \lambda (x_k^{(t)}-z_k+u_k)\bigg),
\end{align}
where $t, \eta^{(t)}$ are the current step of $x$-minimization and its step-size, respectively.

\subsection{Extension to Generalized Projection} \label{sec:method_safep}

While the Euclidean projection onto an $L_0$ constraint naturally yields magnitude-based sparsification, this often yields subpar performance in practice compared to more advanced saliency scores that account for the objective function.
To naturally integrate these into the projection operation, we design a generalized distance metric that introduces a positive-definite diagonal matrix $\mathbf{P}$ that provides a framework to incorporate these advanced saliencies of the form $\mathbf{P}_{\text{[}i,i\text{]}}^{1/2}|x_{\text{[}i\text{]}}|$ in a principled manner:
\begin{align*}
    z_{k+1} 
    & = \operatorname{proj}_{\|\cdot\|_0 \leq d}^\mathbf{P}(x_{k+1} + u_k) \\
    & :=  \underset{\|z\|_0 \leq d}{\arg\min}\, \frac{1}{2} \|z - (x_{k+1} + u_k)\|_\mathbf{P}^2 \\
    & = \underset{\|z\|_0 \leq d}{\arg\min}\, \frac{1}{2} (z - (x_{k+1} + u_k))^\top\mathbf{P}(z - (x_{k+1} + u_k))
\end{align*}
Geometrically, this can be understood as modifying the underlying distance metric to better represent the local geometric structure (\eg, the Hessian) of the original objective function.
We call this \safep{}, where we leverage various saliency scores within the projection step, which we describe in detail below.

We lay out some notable examples of advanced saliency scores and the corresponding $\mathbf{P}$.
The simplest case is $ \mathbf{P} = \mathbf{I} $, where the projection reduces to standard hard thresholding as it corresponds to the Euclidean norm, yielding the original \safe.
Taking this further, setting it as the diagonal Hessian, \ie, $ \mathbf{P} = \operatorname{diag}(\nabla^2f(x)) $, corresponds to Optimal Brain Damage \citep{lecun1989optimal}, a second-order pruning method that aims to remove parameters with minimal impact on the loss function.
Also, using $ \mathbf{P} = \operatorname{diag}(\nabla f(x) \nabla f(x)^\top) $ aligns with the first-order pruning method SNIP \citep{leesnip}, which removes parameters based on gradient sensitivity.
Furthermore, Wanda \citep{sunsimple}, a layer-wise pruning method for language models, corresponds to taking $\mathbf{P}=\operatorname{diag}(\mathbf{A}^\top\mathbf{A})$ where $\mathbf{A}\in\mathbb{R}^{N\times d}$ is an activation with batch size $N$ and feature dimension $d$ from a particular layer to prune.
This corresponds to the diagonal Hessian of the reconstruction error for a single linear layer \citep{sunsimple}.

This generalized projection allows \safep{} to integrate diverse sparsification strategies all within its constrained optimization framework, thus enhancing both effectiveness and robustness in model pruning. 
Our empirical evaluation in \cref{sec:exp_lang} demonstrates its effectiveness in large language model pruning, though the methodology is not confined to this domain and is widely applicable.

\begin{algorithm}[t!]
        \caption{\safe and \safep{} algorithms}
        \begin{algorithmic}[1]
        \REQUIRE Target parameter count $d$, total train iteration $T$, dual-update interval $K$, learning rate $\eta^{(t)}$, perturbation radius $\rho$, penalty parameter $\lambda$, importance matrix $\mathbf{P}$.
        \STATE Initialize $x^{(0)}$
        \STATE $u=\mathbf{0}$
        \FOR{$t$ in $T$}
            \IF{$t\operatorname{mod} K = 0$}
                \IF{\safe}
                    \STATE $z=\operatorname{proj}_{\|\cdot\|_0\leq d}(x^{(t+1)}+u)$ 
                \ELSIF{\safep{}}
                    \STATE $z=\operatorname{proj}^\mathbf{P}_{\|\cdot\|_0\leq d}(x^{(t+1)}+u)$
                \ENDIF
                \STATE $u = u + x^{(t+1)} - z$  
            \ENDIF
            \STATE $x^{(t+1/2)} = x^{(t)} - \eta^{(t)} \nabla f\left(x^{(t)}+\rho\cdot\frac{\nabla f(x^{(t)})}{\lVert \nabla f(x^{(t)})\rVert_2}\right)$  
            \STATE $x^{(t+1)} = x^{(t+1/2)} - \eta^{(t)}\lambda(x^{(t)}-z+u)$  
        \ENDFOR
        \STATE \textbf{return} $\operatorname{proj}_{\|\cdot\|_0\leq d}(x^{(T)})$
        \end{algorithmic}
        \label{alg:ours}
\end{algorithm}

\subsection{Final Algorithm: \safe and \safep{}}

Our final algorithm is summarized in Algorithm \ref{alg:ours}.
We provide an intuitive description of how our algorithm performs sparsification.
Every few steps of $x$-minimization, \safe observes where the closest point on the sparsity constraint from the current $x$ is and registers it on $z$.
While performing flatness-inducing minimization of the objective function on $x$, it penalizes the $x$ iterate to slightly move closer to $z$, the latest estimate of the sparse solution.
This gradually moves the dynamics of $x$ towards sparsity without incurring a sudden change of loss, all while performing flatness induction, yielding a sparse and flat minima.

In practice, particularly for image classification, we introduce scheduling to the penalty parameter $\lambda$ from zero to the target value in a cosine curve in order to apply less restriction in the initial phases of training, which slightly improves performance.
Details of the ablation study on this scheduling strategy can be found in \cref{app:sched_abl}.

\subsection{Convergence Analysis}
Here we present a convergence analysis of \safe.
Precisely, we first prove that our proposed iterative sharpness minimization in the $x$-update converges, then build the rest of the proof upon a well-studied result of ADMM \citep{boyd2011distributed, wang2019global, huang21ADMMQ}.

We start with standard assumptions used in the literature:
\begin{assumption} \label{assump:lowbound}
    (Lower bounded on constraint)  The function $ f $ is lower bounded on $ \mathcal{A} $. That is, there exists a constant $ f_{\min} := \min_{a \in \mathcal{A}} f(a) $ and $ f_{\min} > -\infty $.
\end{assumption}
\begin{assumption} \label{assump:smooth}
    ($\beta$-smoothness) The function $ f $ is differentiable, and its gradient is $\beta$-smooth. That is, $\|\nabla f(x) - \nabla f(y)\| \leq \beta \|x - y\|$
\end{assumption}
\begin{assumption} \label{assump:weakcvx}
    ($\mu$-weak convexity) There exists a constant $ \mu \geq 0 $ such that $ f $ is $ \mu $-weakly convex. i.e., $f(x) + \frac{\mu}{2} \|x\|^2$ is convex.
\end{assumption}

We also define the following notion of stationarity for the optimization problem (\ref{eq:opt}) from \citet{huang21ADMMQ}:
\begin{definition} \label{def:stationary}
    ($\delta$-stationary point) We say a point $ \bar{x} $ is a $\delta$-stationary point of the optimization problem (\ref{eq:opt}) if $\bar{x} \in \arg \min _{a \in \mathcal{A}}\left\|a-\left(\bar{x}-\delta^{-1} \nabla f(\bar{x})\right)\right\|,$
\end{definition}
\ie, the point $\bar{x}$ cannot be locally improved using projected gradient descent with step-size $\delta^{-1}$.
With this definition, we ultimately demonstrate that \safe converges to this $\delta$-stationary point, which is a necessary condition for the optimal solution to problem (\ref{eq:opt}).

We first provide the central lemma on the convergence of our sharpness minimizing $x$ iterates:
\begin{lemma} \label{lem:xmin}
    (Convergence of $x$-minimization)
    Suppose that Assumptions \ref{assump:lowbound} and \ref{assump:smooth} hold and let $\{x^{(t)}_k\}$ be generated by \cref{eq:x-min} in \cref{alg:ours} with step-size $\eta^{(t)}$ and perturbation radius $\rho^{(t)}$ satisfying  $\sum_{t=1}^{\infty}\eta^{(t)} = \infty, \sum_{t=1}^{\infty}\eta^{(t)}\rho^{(t)} < \infty, \lim\sup_t \rho^{(t)} < 1/\beta.$
    Let $\hat{\mathcal{L}}(x)=f(x)+ \frac{\lambda}{2}\|x-z+u\|_2^2$ and assume that $\inf_{x\in\mathbb{N}} \hat{\mathcal{L}}(x^{(t)})>-\infty$.
    Then $\nabla \hat{\mathcal{L}}(x^{(t)})\to 0$.
\end{lemma}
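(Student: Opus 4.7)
The plan is to recast the iteration in \cref{eq:x-min} as inexact gradient descent on $\hatl$, where the only inexactness is the SAM-perturbation bias, and then invoke a standard descent-lemma analysis for smooth nonconvex objectives. First, under \cref{assump:smooth} the augmented objective $\hatl(x)=f(x)+\tfrac{\lambda}{2}\|x-z_k+u_k\|_2^2$ is $L$-smooth with $L=\beta+\lambda$. Writing the update direction as
\begin{equation*}
    g^{(t)} := \nabla f\!\left(x^{(t)}+\epsilon^\star(x^{(t)})\right) + \lambda\bigl(x^{(t)}-z_k+u_k\bigr) = \nabla\hatl(x^{(t)}) + \Delta^{(t)},
\end{equation*}
the $\beta$-smoothness of $f$ combined with $\|\epsilon^\star(x^{(t)})\|_2\leq \rho^{(t)}$ yields the key bias bound $\|\Delta^{(t)}\|_2\leq \beta\rho^{(t)}$, so the iterate reads $x^{(t+1)}=x^{(t)}-\eta^{(t)}(\nabla\hatl(x^{(t)})+\Delta^{(t)})$.

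The core of the argument is the classical descent lemma for $L$-smooth $\hatl$, followed by Young's inequality $2ab\leq \tfrac{1}{2}a^2+2b^2$ applied to the cross term $\eta^{(t)}\langle\nabla\hatl(x^{(t)}),\Delta^{(t)}\rangle$. Choosing $t$ large enough so that $\eta^{(t)}\leq 1/(4L)$, standard manipulations produce a per-step bound of the form
\begin{equation*}
    \hatl(x^{(t+1)}) \leq \hatl(x^{(t)}) - \tfrac{\eta^{(t)}}{2}\|\nabla\hatl(x^{(t)})\|_2^2 + C\,\eta^{(t)}\beta^2(\rho^{(t)})^2
\end{equation*}
for an absolute constant $C$. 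Telescoping from $t=1$ to $\infty$ and using $\inf_t \hatl(x^{(t)})>-\infty$ gives $\sum_{t=1}^\infty \eta^{(t)}\|\nabla\hatl(x^{(t)})\|_2^2<\infty$, provided that $\sum_t \eta^{(t)}(\rho^{(t)})^2<\infty$. This is precisely where $\limsup_t \rho^{(t)}<1/\beta$ enters: it furnishes a uniform bound $\rho^{(t)}\leq M$, so that $\eta^{(t)}(\rho^{(t)})^2\leq M\,\eta^{(t)}\rho^{(t)}$ is summable by the second hypothesis.

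Finally, the summability combined with $\sum_t \eta^{(t)}=\infty$ immediately forces $\liminf_t\|\nabla\hatl(x^{(t)})\|_2=0$. To upgrade this $\liminf$ to the full $\lim$, I would argue by contradiction in the standard Zoutendijk-style way: if $\|\nabla\hatl(x^{(t)})\|_2\geq 2\epsilon$ along some subsequence $\{n_k\}$ while $\|\nabla\hatl(x^{(m_k)})\|_2\leq\epsilon$ for matched indices $m_k<n_k$, then the summability forces $\sum_{t=m_k}^{n_k-1}\eta^{(t)}\to 0$, and the $L$-smoothness estimate $|\|\nabla\hatl(x^{(t+1)})\|_2-\|\nabla\hatl(x^{(t)})\|_2|\leq L\eta^{(t)}(\|\nabla\hatl(x^{(t)})\|_2+\beta\rho^{(t)})$ together with a discrete Grönwall inequality (using $\sum\eta^{(t)}\rho^{(t)}<\infty$ to kill the forcing term) would pin $\|\nabla\hatl(x^{(n_k)})\|_2$ arbitrarily close to $\epsilon$, contradicting the $2\epsilon$ assumption.

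The main obstacle I anticipate is the bias-bookkeeping: the hypothesis $\sum_t\eta^{(t)}\rho^{(t)}<\infty$ is strictly weaker than what a naive application of the descent lemma would demand (it naturally asks for $\sum_t\eta^{(t)}(\rho^{(t)})^2<\infty$), and it is exactly the role of $\limsup_t\rho^{(t)}<1/\beta$ to bridge the gap by yielding uniform boundedness of $\rho^{(t)}$. Once these two hypotheses are leveraged jointly in this way, the rest of the proof reduces to routine inexact-gradient-descent analysis, and the lemma feeds cleanly into the ADMM convergence machinery of \citet{wang2019global, huang21ADMMQ} to complete the overall proof of \safe's convergence to a $\delta$-stationary point.
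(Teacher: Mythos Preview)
Your proposal is correct and follows essentially the same route as the paper: both rewrite the update as inexact gradient descent on $\hatl$ with bias $\|\Delta^{(t)}\|\leq\beta\rho^{(t)}$, apply the descent lemma for a smooth objective, telescope to obtain $\sum_t\eta^{(t)}\|\nabla\hatl(x^{(t)})\|^2<\infty$, and then upgrade the $\liminf$ to a full limit via a Lipschitz recursion on $\|\nabla\hatl(x^{(t)})\|$. The only differences are packaging: the paper handles the cross term with $a\leq\tfrac{1}{2}(1+a^2)$ (so its error term is already linear in $\rho^{(t)}$, making the role of $\limsup_t\rho^{(t)}<1/\beta$ that of keeping the descent coefficient positive rather than your boundedness-of-$\rho^{(t)}$ use), and it replaces your Zoutendijk/Gr\"onwall sketch by a black-box sequence lemma from \citet{khanh2024fundamental} (if $a^{(t+1)}-a^{(t)}\leq b^{(t)}a^{(t)}+c^{(t)}$ with $\{b^{(t)}\}$ bounded, $\sum b^{(t)}=\infty$, $\sum c^{(t)}<\infty$, $\sum b^{(t)}a^{(t)}<\infty$, then $a^{(t)}\to 0$), instantiated with $a^{(t)}=\|\nabla\hatl(x^{(t)})\|$, $b^{(t)}\propto\eta^{(t)}$, $c^{(t)}\propto\eta^{(t)}\rho^{(t)}$. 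Your implicit assumption that $\eta^{(t)}$ is eventually small (``$\eta^{(t)}\leq 1/(4L)$'') is not derivable from the stated hypotheses, but the paper makes the same unstated assumption when asserting $\beta\eta^{(t)}<1$ for large $t$.
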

The detailed proof is provided in \cref{app:xmin_proof}.
This shows that running \cref{eq:x-min} produces a sequence that converges to the stationary point of the augmented Lagrangian $\hat{\mathcal{L}}$ with respect to $x$.

We use this to derive the convergence of \safe as the following corollary:
\begin{corollary} \label{cor:safe_conv}
    (Convergence of \safe)
    Suppose that Assumptions \ref{assump:lowbound}-\ref{assump:weakcvx} hold.
    Assume further that $\delta$ is chosen large enough so that $\delta^{-1}\beta^2-(\delta-\mu)/2<0$.
    Let $(\bar{x}, \bar{z}, \bar{u})$ be a limit point of \safe algorithm.
    Then $\bar x$ is a $\delta$-stationary point of the optimization problem (\ref{eq:opt}).
\end{corollary}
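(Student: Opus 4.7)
The plan is to reduce \safe to a generalized nonconvex ADMM iteration and then invoke the established convergence machinery for such schemes \citep{wang2019global, huang21ADMMQ}. Concretely, the inner $x$-update in \cref{alg:ours} is, by design, an approximate minimization of $\hat{\mathcal{L}}(\cdot, z_k, u_k)$: \cref{lem:xmin} tells us that under Assumptions \ref{assump:lowbound}--\ref{assump:smooth} together with the step-size and perturbation-radius schedule, the inner iterates drive $\nabla_x \hat{\mathcal{L}}(x^{(t)}_k, z_k, u_k) \to 0$. Therefore we may, without loss of generality, treat the outer iterate $x_{k+1}$ as a stationary point of the $x$-subproblem, i.e.\ $\nabla f(x_{k+1}) + \lambda(x_{k+1}-z_k+u_k) = 0$. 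Once this is established, the outer loop consists of exact $z$-projection onto the (nonconvex but semi-algebraic) $L_0$-ball plus the usual scaled dual ascent $u_{k+1}=u_k+x_{k+1}-z_{k+1}$, which is precisely the setting treated in the nonconvex ADMM literature.

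With this reduction, I would apply the descent-type argument of \citet{wang2019global}/\citet{huang21ADMMQ}: under $\beta$-smoothness and $\mu$-weak convexity of $f$, the augmented Lagrangian $\mathcal{L}(x_k,z_k,u_k)$ is monotonically decreasing by at least a constant multiple of $\|x_{k+1}-x_k\|^2 + \|z_{k+1}-z_k\|^2$, provided that the penalty parameter $\delta:=\lambda$ is chosen so that $\delta^{-1}\beta^{2} - (\delta-\mu)/2 < 0$ — exactly the hypothesis in the corollary. Combined with the lower-boundedness of $f$ on $\mathcal{A}$ (Assumption \ref{assump:lowbound}), this yields summability of the successive differences and hence the existence of a limit point $(\bar{x},\bar{z},\bar{u})$ with $x_{k+1}-z_{k+1}\to 0$ and $u_{k+1}-u_k\to 0$.

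It remains to translate the fixed-point relations at $(\bar{x},\bar{z},\bar{u})$ into the definition of a $\delta$-stationary point. Passing to the limit in the $u$-update gives $\bar{x}=\bar{z}$, so $\|\bar{x}\|_0\le d$ and primal feasibility holds. Passing to the limit in the $x$-stationarity condition gives $\bar{u}=-\lambda^{-1}\nabla f(\bar{x})$, and passing to the limit in the $z$-update gives $\bar{z}\in\arg\min_{\|z\|_0\le d}\|z-(\bar{x}+\bar{u})\|$. Substituting $\bar{z}=\bar{x}$ and the expression for $\bar{u}$ yields
\begin{equation*}
    \bar{x}\in\underset{\|a\|_0\le d}{\arg\min}\,\bigl\|a - (\bar{x} - \lambda^{-1}\nabla f(\bar{x}))\bigr\|,
\end{equation*}
which, with $\delta=\lambda$, is precisely \cref{def:stationary}.

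The principal obstacle is the second step: verifying the sufficient-decrease inequality for $\mathcal{L}$ in a genuinely nonconvex setting where the constraint set $\{z:\|z\|_0\le d\}$ is not convex and the projection is multi-valued. This is where the weak-convexity Assumption \ref{assump:weakcvx} and the precise form of the condition $\delta^{-1}\beta^2 - (\delta-\mu)/2 < 0$ enter: they are exactly what is needed to absorb the gradient mismatch between consecutive $x$-iterates into the quadratic coupling term and thereby restore monotone descent of $\mathcal{L}$. A secondary technical point worth care is that our $x$-subproblem is solved only approximately (to a stationarity residual guaranteed by \cref{lem:xmin}); this introduces an additional error term in the descent inequality that must be shown to be summable, which follows from the summability condition $\sum_t \eta^{(t)}\rho^{(t)}<\infty$ assumed in \cref{lem:xmin}.
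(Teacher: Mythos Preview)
Your proposal is correct and follows essentially the same route as the paper: use \cref{lem:xmin} to reduce the sharpness-aware $x$-step to exact stationarity of $\hat{\mathcal{L}}(\cdot,z_k,u_k)$, then defer to the nonconvex ADMM analysis of \citet{huang21ADMMQ} for the outer loop. The paper's proof is in fact terser than yours---it simply states the reduction and invokes \citet{huang21ADMMQ} without spelling out the sufficient-decrease and fixed-point arguments---so your sketch of the descent inequality (where the condition $\delta^{-1}\beta^2-(\delta-\mu)/2<0$ enters) and the limit-point computation $\bar{x}=\bar{z}$, $\bar{u}=-\lambda^{-1}\nabla f(\bar{x})$ is a faithful unpacking of what that citation contains.
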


This demonstrates that \safe converges to the stationary point of the sparsity-constrained optimization problem (\ref{eq:opt}).
We note that, while the technical contributions of our analysis might be considered modest, \safe is built on a theoretically rigorous foundation, unlike many other pruning techniques that often rely primarily on ad-hoc intuitions.

\begin{figure*}[t!]
    \centering
    \begin{subfigure}[b]{0.25\linewidth}
        \centering
        \includegraphics[width=\linewidth,trim={0 0 0 0},clip]{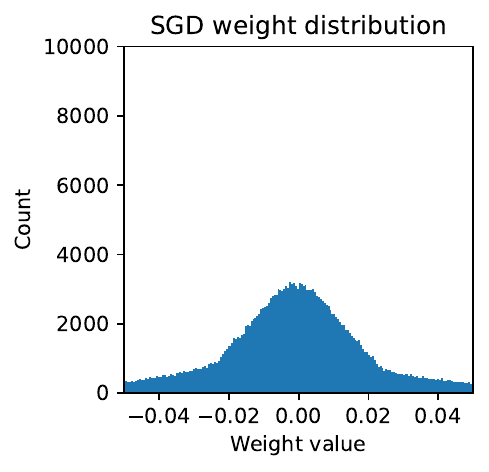}
         \caption{Dense training}
    \end{subfigure}
    \begin{subfigure}[b]{0.25\linewidth}
         \centering
         \includegraphics[width=\linewidth,trim={0 0 0 0},clip]{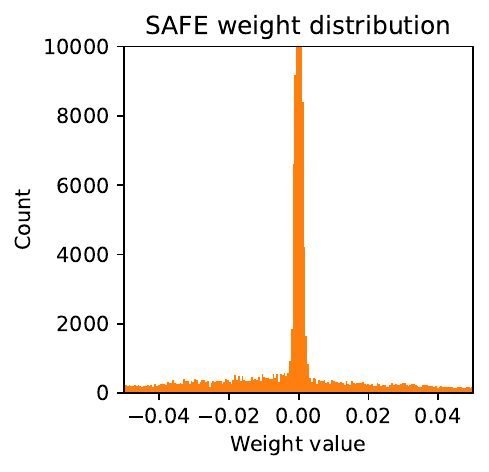}
         \caption{\safe}
     \end{subfigure}
    \centering
    \begin{subfigure}[b]{0.23\linewidth}
        \centering
        \includegraphics[width=\linewidth,trim={1em -1em 0.5em 0.5em},clip]{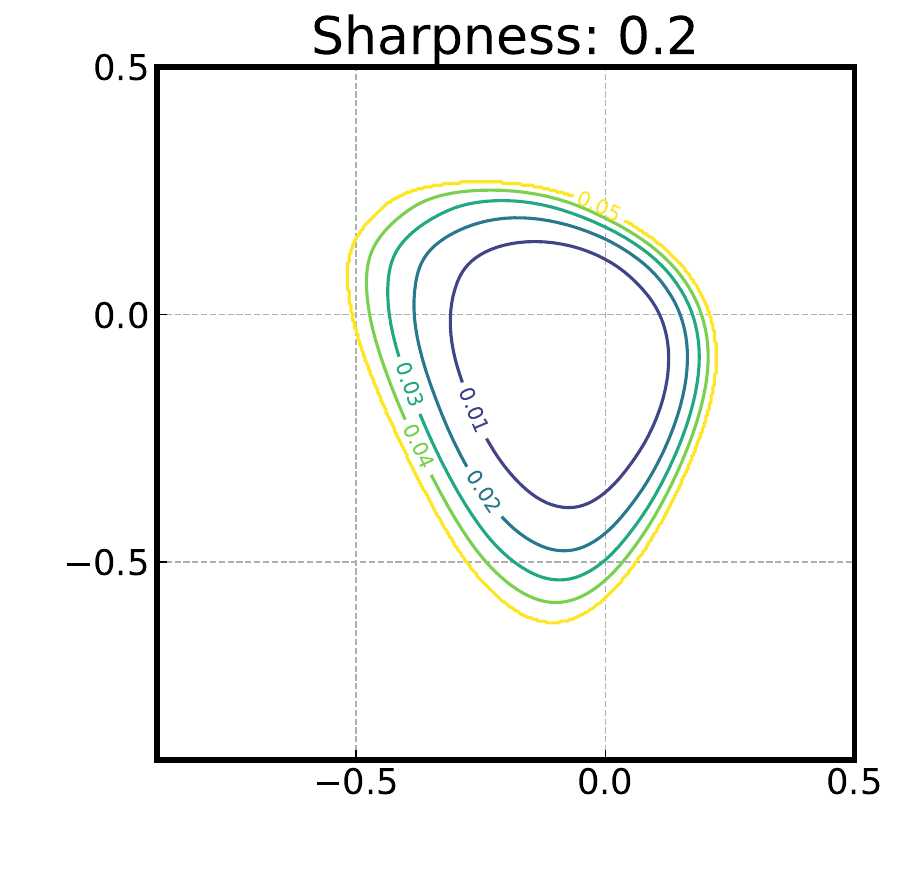}
         \caption{ADMM}
    \end{subfigure}
    \begin{subfigure}[b]{0.23\linewidth}
         \centering
         \includegraphics[width=\linewidth,trim={1em -1em 0.5em 0.5em},clip]{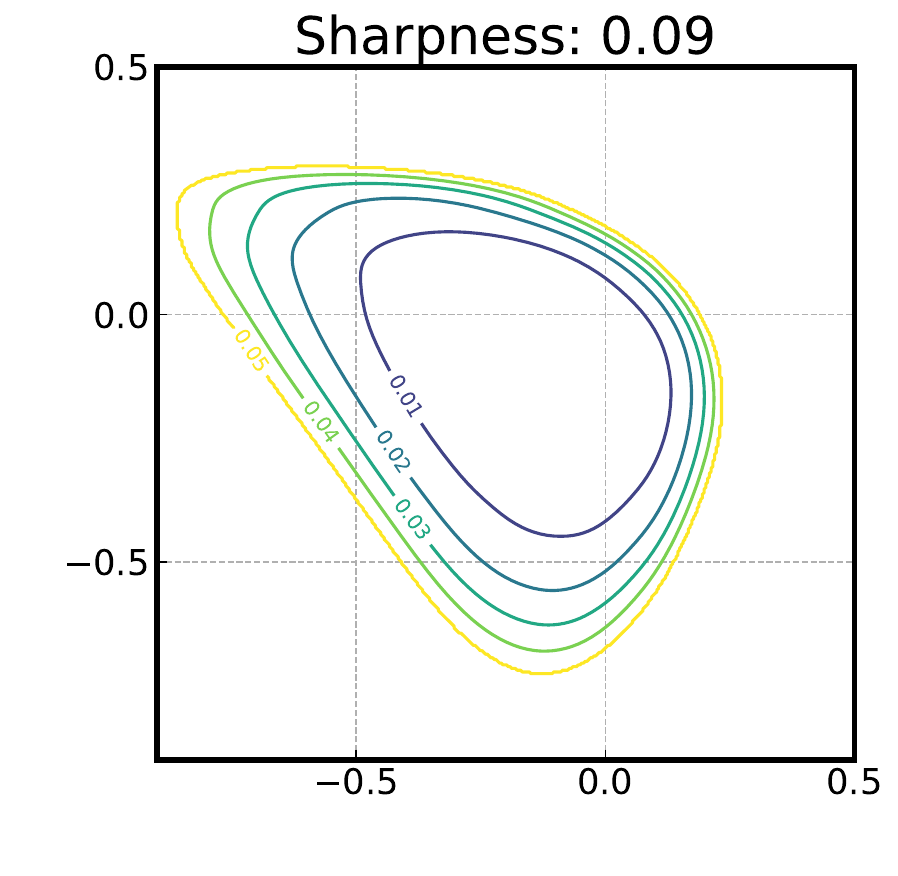}
         \caption{\safe}
     \end{subfigure}
    \caption{(a-b) Weight distributions of densely-trained model and model trained with \safe, and (c-d) loss landscape and maximum Hessian eigenvalue of minima found by ADMM and \safe.
    \safe yields sparse and flat solutions.}
    \label{fig:dist_landscape}
\end{figure*}

\begin{figure*}[t!]
    \centering
    \includegraphics[width=\linewidth]{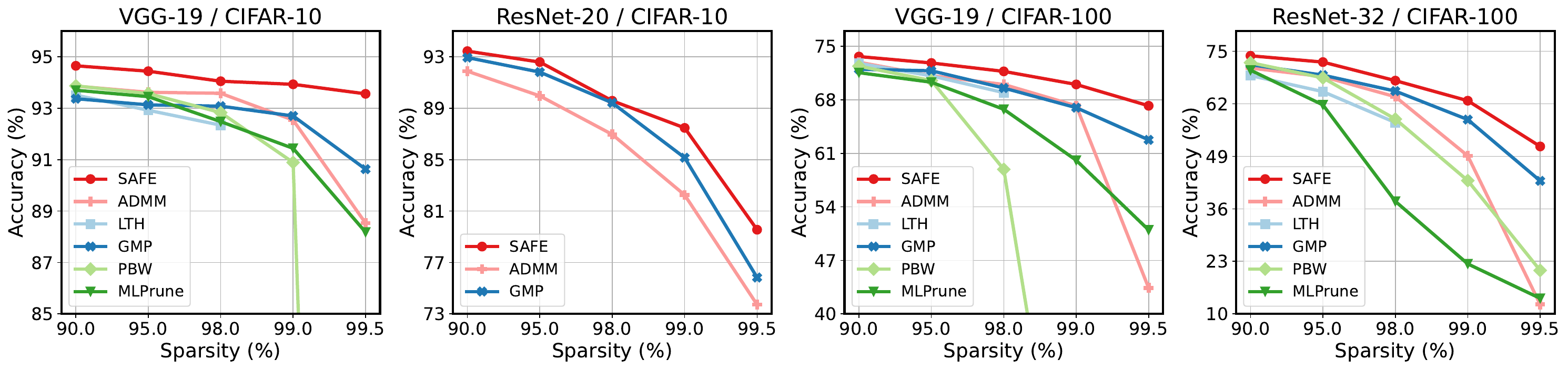}
    \caption{Validation accuracy (mean$\pm$std) of VGG-19 and ResNet-20/32 models on CIFAR-10/100 pruned across different sparsity levels and methods. \safe consistently achieves superior performance across a broad range of sparsity levels.}
    \label{fig:sparisity_val_acc}
\end{figure*}

\begin{table*}[t]
    \centering
    \small
    \setlength{\tabcolsep}{6pt}
    \caption{Perplexities (mean$\pm$std) of LLaMa models pruned to various sparsity levels using different methods. \safe achieves competitive performance, while \safep{} outperforms baselines across all settings.}
    \label{tab:llm_pruning}
    \resizebox{0.8\linewidth}{!}{%
        \begin{tabular}{cl
            r@{\hspace{0.5em}}l@{\phantom{0}}
            r@{\hspace{0.5em}}l@{\phantom{0}}
            r@{\hspace{0.5em}}l}
            \toprule
            & & \multicolumn{4}{c}{LLaMa-2} & \multicolumn{2}{c}{LLaMa-3} \\
            \cmidrule(lr){3-6} \cmidrule(lr){7-8}
            & & \multicolumn{2}{c}{7B} & \multicolumn{2}{c}{13B} & \multicolumn{2}{c}{8B} \\
             Sparsity & Method & \multicolumn{2}{c}{\texttt{Wikitext/C4}}  & \multicolumn{2}{c}{\texttt{Wikitext/C4}} & \multicolumn{2}{c}{\texttt{Wikitext/C4}} \\
            \midrule
            0\% & 
                Dense 
                    & \alignednum{5.47} & /  \alignednum{7.26}   
                    & \alignednum{4.88} & / \alignednum{6.72} 
                    & \alignednum{6.23} & / \alignednum{9.53}  \\
            \midrule
            \multirow{6.5}{*}{50\%} 
                &  Magnitude 
                    & \alignednum{16.03}  & / \alignednum{21.33}  
                    & \alignednum{6.82}  & / \alignednum{9.37}  
                    & \alignednum{134.20}  & / \alignednum{273.3} \\
                &  SparseGPT 
                    & \alignednum{6.99}[0.03] & /  \alignednum{9.20}[0.03] 
                    & \alignednum{6.06}[0.03] & / \alignednum{8.20}[0.01] 
                    & \alignednum{9.36}[0.11] & / \alignednum{13.96}[0.02] \\
                &  Wanda 
                    & \alignednum{6.92}[0.01] & /  \alignednum{9.23}[0.00] 
                    & \alignednum{5.98}[0.01] & / \alignednum{8.28}[0.01] 
                    & \alignednum{9.71}[0.03] & / \alignednum{14.88}[0.04] \\
                & ALPS 
                    & \alignednum{6.87}[0.01] & /  \alignednum{8.98}[0.00] 
                    & \alignednum{5.96}[0.02] & / \alignednum{8.09}[0.04]  
                    & \alignednum{\underline{9.05}}[0.12] & / \alignednum{\underline{13.40}}[0.06] \\
                \rowcolor{gray!10} \cellcolor{white} 
                & \safe 
                    &  \alignednum{\underline{6.78}}[0.01] &  /  \alignednum{\underline{8.93}}[0.00]  
                    &  \alignednum{\underline{5.76}}[0.01] & /  \alignednum{\underline{7.85}}[0.02] 
                    &  \alignednum{9.59}[0.06] & / \alignednum{14.60}[0.04] \\
                \rowcolor{gray!10} \cellcolor{white}
                & \safep{} 
                    & \alignednum{\textbf{6.56}}[0.01] & /  \alignednum{\textbf{8.71}}[0.00]  
                    & \alignednum{\textbf{5.67}}[0.01] & / \alignednum{\textbf{7.74}}[0.01] 
                    & \alignednum{\textbf{8.62}}[0.06] & / \alignednum{\textbf{13.26}}[0.06]\\
            \midrule
            \multirow{6.5}{*}{60\%} 
                &  Magnitude 
                    & \alignednum{1864}  & / \alignednum{2043} 
                    & \alignednum{11.81}  & / \alignednum{14.62} 
                    &  \alignednum{5335}  & / \alignednum{7438} \\
                &  SparseGPT 
                    & \alignednum{10.19}[0.08] & / \alignednum{12.86}[0.05] 
                    & \alignednum{8.31}[0.09] & / \alignednum{10.85}[0.09]  
                    & \alignednum{15.46}[0.40] & / \alignednum{21.25}[0.18] \\
                &  Wanda 
                    & \alignednum{10.75}[0.07] & / \alignednum{13.87}[0.01] 
                    & \alignednum{8.43}[0.07] & / \alignednum{11.55}[0.01] 
                    & \alignednum{22.06}[0.19] & / \alignednum{32.28}[0.37] \\
                &  ALPS 
                    & \alignednum{9.55}[0.00] & / \alignednum{\underline{11.24}}[0.03] 
                    & \alignednum{7.54}[0.03] & / \alignednum{9.87}[0.05] 
                    & \alignednum{\underline{14.03}}[0.35] & /  \alignednum{\underline{18.72}}[0.15] \\
                \rowcolor{gray!10} \cellcolor{white} 
                & \safe
                    & \alignednum{\underline{9.20}}[0.04] & / \alignednum{11.51}[0.04] 
                    & \alignednum{\underline{7.18}}[0.03] & / \alignednum{\underline{9.59}}[0.03]  
                    & \alignednum{15.90}[0.25] & / \alignednum{22.26}[0.16]  \\
                \rowcolor{gray!10} \cellcolor{white} 
                & \safep{} 
                    & \alignednum{\textbf{8.30}}[0.06] & / \alignednum{\textbf{10.59}}[0.00] 
                    & \alignednum{\textbf{6.78}}[0.04] & / \alignednum{\textbf{9.02}}[0.15] 
                    & \alignednum{\textbf{12.18}}[0.22] & / \alignednum{\textbf{17.30}}[0.02]\\
            \midrule
            \multirow{6.5}{*}{4:8} 
                &  Magnitude 
                    & \alignednum{15.91}  & / \alignednum{31.61} 
                    & \alignednum{7.32}  & / \alignednum{9.96} 
                    & \alignednum{212.5}  & / \alignednum{336.3} \\
                &  SparseGPT
                    & \alignednum{8.42}[0.05] & / \alignednum{10.73}[0.03] 
                    & \alignednum{7.02}[0.06] & / \alignednum{9.33}[0.04] 
                    & \alignednum{12.16}[0.20] & / \alignednum{17.36}[0.06]\\
                &  Wanda 
                    & \alignednum{8.64}[0.03] & / \alignednum{11.35}[0.01] 
                    & \alignednum{7.01}[0.02] & / \alignednum{9.70}[0.03] 
                    & \alignednum{13.84}[0.04] & / \alignednum{21.14}[0.06] \\
                &  ALPS 
                    & \alignednum{\underline{8.11}}[0.09] & / \alignednum{\underline{10.21}}[0.04]  
                    & \alignednum{6.81}[0.07] & / \alignednum{9.33}[0.04] 
                    & \alignednum{\underline{11.38}}[0.17] & / \alignednum{\underline{16.10}}[0.10] \\
                \rowcolor{gray!10} \cellcolor{white} 
                & \safe 
                    & \alignednum{8.21}[0.01] & / \alignednum{10.61}[0.04]  
                    & \alignednum{6.60}[0.02] & / \alignednum{\underline{8.95}}[0.02]  
                    & \alignednum{12.15}[0.14] & / \alignednum{17.90}[0.15] \\ 
                \rowcolor{gray!10} \cellcolor{white} 
                & \safep{} 
                    &  \alignednum{\textbf{7.59}}[0.03] & / \alignednum{\textbf{9.88}}[0.01]  
                    &  \alignednum{\textbf{6.37}}[0.03] & / \alignednum{\textbf{8.61}}[0.01]  
                    &  \alignednum{\textbf{10.51}}[0.13] & / \alignednum{\textbf{15.67}}[0.02] \\
            \midrule
            \multirow{6.5}{*}{2:4} 
                &  Magnitude 
                    & \alignednum{37.77}  & / \alignednum{74.70} 
                    & \alignednum{8.88}  & / \alignednum{11.72} 
                    & \alignednum{792.8}  & / \alignednum{2245} \\
                &  SparseGPT 
                    & \alignednum{11.00}[0.20] & / \alignednum{13.54}[0.03]
                    & \alignednum{8.78}[0.09]  & / \alignednum{11.26}[0.11]
                    & \alignednum{15.87}[0.32] & / \alignednum{22.45}[0.12] \\
                &  Wanda 
                    & \alignednum{12.17}[0.02] & / \alignednum{15.60}[0.11] 
                    & \alignednum{9.01}[0.04]  & / \alignednum{12.40}[0.01] 
                    & \alignednum{23.03}[0.38] & / \alignednum{34.91}[0.31] \\
                &  ALPS 
                    & \alignednum{\underline{9.99}}[0.19] & / \alignednum{\underline{12.04}}[0.04]  
                    & \alignednum{8.16}[0.17] & / \alignednum{10.35}[0.18] 
                    & \alignednum{\underline{14.53}}[0.33] & / \alignednum{\underline{19.74}}[0.18] \\
                \rowcolor{gray!10} \cellcolor{white} 
                & \safe 
                    & \alignednum{10.53}[0.13] & / \alignednum{13.20}[0.07]  
                    & \alignednum{\underline{7.64}}[0.05] & / \alignednum{\underline{10.10}}[0.01]  
                    & \alignednum{17.49}[0.27] & / \alignednum{24.45}[0.13] \\
                \rowcolor{gray!10} \cellcolor{white} 
                & \safep{} 
                    & \alignednum{\textbf{8.96}}[0.07] & / \alignednum{\textbf{11.34}}[0.03]  
                    & \alignednum{\textbf{7.20}}[0.04] & / \alignednum{\textbf{9.52}}[0.01]
                    & \alignednum{\textbf{13.39}}[0.23] & / \alignednum{\textbf{19.03}}[0.01] \\
            \bottomrule
        \end{tabular}%
    }
\end{table*}

\section{Experiments}
In this section, we demonstrate that \safe converges to sparse and flat solutions, leading to performance improvements over baselines in both image classification and language modeling tasks.
We also show that \safe is robust to noisy label training and corruptions during inference.
The codes to reproduce the results are provided in \href{https://github.com/LOG-postech/safe-jax}{\texttt{JAX}} and \href{https://github.com/LOG-postech/safe-torch}{\texttt{PyTorch}}, with further details provided in \cref{app:implementation}.

\subsection{Convergence to Sparse and Flat Solutions} \label{sec:exp_sf}
We first show that \safe successfully guides training towards sparse and flat solutions compared to naive baselines on a simple neural network model.
Specifically, we analyze the weight distributions of models trained with standard dense training and \safe to assess its sparsification capability.
We also measure sharpnesses of \safe and compare it to that of ADMM \citep{zhang2018systematic} as a non-sharpness-minimizing baseline, by computing maximum Hessian eigenvalues and visualizing loss landscapes.
The results are presented in \cref{fig:dist_landscape}.
Our findings indicate that \safe effectively induces sparsity while simultaneously enforcing flatness, as evidenced by the concentration of weights near zero in contrast to dense training and a wider minimum with a lower Hessian eigenvalue compared to ADMM.
This result demonstrates the effectiveness of \safe in tackling the sharpness-aware sparsity-constrained optimization problem (\ref{eq:flat_sparse_problem}). 
Further experimental details are provided in \cref{app:exp_sf_detail}.

\subsection{Evaluations on Image Classification} \label{sec:exp_img}
In this section, we show that \safe can achieve outstanding generalization performance among various methods for CIFAR-10/100 image classification tasks \citep{krizhevsky2009learning}.
Specifically, we evaluate pruning performance on VGG-19 \citep{simonyan2014very} and ResNet-20/32\footnote{Following the convention  \citet{wangpicking,zhou2021effective}, we double the number of channels from standard ResNet models.} \citep{he2016deep} using a range of representative pruning methods, including PBW \citep{han2015learning}, GMP \citep{kurtic2022gmp, zhu2017prune}, LTH \citep{liu2024survey, franklelottery}, ADMM \citep{zhang2018systematic}, and MLPrune \citep{zeng2018mlprune}, some of which achieves competitive to state-of-the-art performance in image classification \citep{hoefler2021sparsity}.
We mostly use standard values for common hyperparameters such as training epochs, learning rate, and weight decay \citep{zhou2021effective} and tune the hyperparameters unique to \safe, which we report in detail in \cref{app:hyper}.
Notably, we do not perform additional training after pruning, and instead perform a cost-efficient statistical correction on the batch-norm layers with only a few forward passes (batch-norm tuning or BNT), which is a common practice in the literature \citep{hubara2021accelerated,frantar2022optimal,peste2022cram}.
We refer to \cref{app:exp_img_detail} for full experimental details.
The final validation accuracies are provided in \cref{fig:sparisity_val_acc} and \cref{tab:cifar} of \cref{app:add_exp}.

Our findings show that across most configurations and sparsity levels, \safe generally outperforms all baselines.
Also, \safe exhibits greater robustness under extreme sparsity compared to non-sharpness-minimized approaches (\eg, $99.5\%$).
Crucially, \safe achieves these results without requiring costly retraining, whereas PBW and IMP depend on multiple rounds of retraining.
These results highlight the effectiveness of \safe in preserving model accuracy during sparsification, especially under aggressive pruning scenarios.

\subsection{Evaluation on Large Language Model Pruning} \label{sec:exp_lang}
Here we scale our evaluations to modern large-scale settings and demonstrate that \safe also delivers competitive performance against state-of-the-art LLM post-training pruning techniques.

For this purpose, we adapt \safe and \safep{} to sequentially optimize the reconstruction error minimization (REM) objective for each transformer block \citep{shin2024rethinking}, similarly to other LLM pruning techniques.
For \safep{}, we incorporate Wanda projection $z$-step, which identifies superior subnetworks compared to naive magnitude-based pruning in LLMs without compromising efficiency \citep{sunsimple}.

With this, we prune one of the most widely adopted language model family, LLaMA2-7b/13b \citep{touvron2023llama} and the more recent LLaMA3-8b \citep{grattafiori2024llama3herdmodels}, to $50\%$ and $60\%$ sparsities, as well as structured 4:8 and 2:4 sparsities.
We compare \safe with state-of-the-art LLM post-training pruning methods such as SparseGPT \citep{frantar2023sparsegpt}, Wanda \citep{sunsimple}, ALPS (ADMM-based) \citep{meng2024alps}, as well as magnitude pruning \citep{han2015learning} and evaluate the perplexity on Wikitext2 \citep{merity2022pointer} and C4 validation sets.
We follow the common practice of randomly selecting $128$ samples from the C4  training dataset \citep{raffel2020exploring}.
We refer to \cref{app:exp_lang_detail} for experimental details.
The results are reported in \cref{tab:llm_pruning}.

We find that \safe performs competitively to state-of-the-art methods, while \safep{} surpasses them across all models and sparsity settings. 
Considering that these baselines are tailored specifically to pruning LLMs, this demonstrates the flexibility of \safe in adapting to different scenarios.
Moreover, our method is more efficient than ALPS, which requires $\times2.54$ more runtime than \safe (see \cref{app:lang_cost}).

\subsection{Robustness to Noisy Data} \label{sec:exp_robust}
Noisy data pose significant challenges in real-world scenarios.
To address this, we evaluate \safe on three representative challenges: incorrect training labels \citep{song2022learning}, inference-time input corruption that arises naturally \citep{hendrycks2019benchmarking}, and corruptions that are deliberately introduced by adversaries \citep{szegedy2013intriguing}.

\paragraph{Training on noisy labels}
To assess the robustness of \safe against label noise, we randomly corrupt $\{25\%, 50\%, 75\%\}$ of labels in CIFAR-10 and use it to train ResNet-20 with both \safe and ADMM.
The same hyperparameters from \cref{sec:exp_img} are used in all experiments.
As presented in \cref{tab:label_noise}, we observe that \safe consistently outperforms ADMM across all levels of label noise and sparsity, with accuracy improvements ranging from +$10\%$ to +$30\%$.

Additionally, we observe that ADMM relies heavily on sparsity to mitigate label noise, exhibiting an overall trend of increasing accuracy with higher sparsity levels. 
This dependence is further reflected in the sparse double descent pattern reported at the 25\% noise ratio \citep{he2022sparse}, where accuracy initially declines up to 80\% sparsity, rises sharply to 79\% at 90\% sparsity, and then drops again to 77\% at 95\% sparsity.
This contributes to the overall decreasing performance gap between ADMM and \safe, which may be interpreted as the benefit of sharpness-minimization diminishing with fewer parameters \citep{shin2023critical}.
However, more crucially, the overall under-performance of ADMM indicates that sparsity alone is a poor remedy for label noise, highlighting the effectiveness of \safe---particularly its flatness enforcement---in reducing the impact of label noise.
This aligns well with previous observations that sharpness-minimization can enhance robustness toward label noise \citep{baek2024sam}.
Also, the lack of double descent in \safe suggests that its effectiveness may be attributed to sharpness minimization functioning as an effective regularizer, as supported by the claims of \citet{nakkiranoptimal} that ‘optimal’ regularization can mitigate the double descent phenomenon.

\begin{table}[t]
\centering
\caption{Noisy label training. Validation accuracy is measured for sparse models trained with ADMM and \safe under various levels of label noise and sparsity. \safe is much more robust to label noise.}
\label{tab:label_noise}
\resizebox{0.85\linewidth}{!}{%
    \begin{tabular}{cllll}
        \toprule
            & & \multicolumn{3}{c}{Noise ratio}     \\
        Sparsity
            & Method
                & \multicolumn{1}{c}{25\%}  
                & \multicolumn{1}{c}{50\%}  
                & \multicolumn{1}{c}{75\%}  \\ 
        \midrule
        \multirow{2}{*}{70\%}
            & ADMM                  
                &  77.00$_{\pm0.91}$              
                &  59.18$_{\pm0.55}$     
                &  32.62$_{\pm0.89}$    \\
            & \safe \cellcolor{gray!10} 
                & \textbf{90.58$_{\pm0.30}$}  \cellcolor{gray!10} 
                & \textbf{86.51 $_{\pm0.16}$} \cellcolor{gray!10} 
                & \textbf{67.01$_{\pm0.54}$}  \cellcolor{gray!10} \\
        \midrule
        \multirow{2}{*}{80\%}
            & ADMM                 
                & 76.18$_{\pm0.56}$      
                & 62.67$_{\pm0.38}$  
                & 32.86$_{\pm1.12}$    \\
            &  \safe \cellcolor{gray!10} 
                & \textbf{91.25$_{\pm0.12}$} \cellcolor{gray!10} 
                & \textbf{86.55$_{\pm0.07}$} \cellcolor{gray!10} 
                & \textbf{66.49$_{\pm0.56}$} \cellcolor{gray!10} \\
        \midrule
        \multirow{2}{*}{90\%}
            & ADMM  
                & 79.40$_{\pm0.12}$ 
                & 66.64$_{\pm0.13}$ 
                & 36.84$_{\pm0.94}$   \\
            & \safe \cellcolor{gray!10} 
                & \textbf{90.68$_{\pm0.21}$}  \cellcolor{gray!10} 
                & \textbf{86.49$_{\pm0.06}$}  \cellcolor{gray!10} 
                & \textbf{64.72 $_{\pm0.61}$} \cellcolor{gray!10} \\
        \midrule
        \multirow{2}{*}{95\%}
            & ADMM   
                & 77.71$_{\pm0.52}$    
                & 67.10$_{\pm1.37}$  
                & 39.68$_{\pm1.44}$    \\
            & \safe \cellcolor{gray!10} 
                & \textbf{89.86$_{\pm0.11}$} \cellcolor{gray!10} 
                & \textbf{85.18$_{\pm0.15}$} \cellcolor{gray!10} 
                & \textbf{64.25$_{\pm0.36}$} \cellcolor{gray!10} \\
        \bottomrule
    \end{tabular}%
}
\end{table}

\paragraph{Evaluation on corrupted image}
We evaluate the sparse models trained with ADMM and \safe, as obtained in \cref{sec:exp_img}, on the CIFAR-10 test set with common image corruptions and adversarial perturbations.
Specifically, for common corruptions, we use CIFAR-10C \citep{hendrycks2019benchmarking}, a benchmark consisting of CIFAR-10 test images corrupted with $19$ types of real-world noise (\eg, fog, snow, \etc) and distortions (\eg, jpeg compression, contrast, \etc) at five levels of intensity.
We average the performance across all corruption types for intensity levels $3$ and $5$.
For adversarial noise, we follow \citet{zhang2024duality} and use a 10-step Projected Gradient Descent (PGD) attack on each sparse model under $l_\infty$ and $l_2$ norm with bound $\epsilon=1/255$ and $3/255$ and step size $\alpha=\epsilon/4$, respectively.
As shown in \cref{tab:corruption}, \safe enhances robustness to both common and adversarial image corruptions, aligning with previous work on sharpness minimization \citep{zhang2024duality, wei2023sharpness}.

\begin{table}[t]
    \centering
    \caption{Evaluation on corrupted data. CIFAR-10C is used for common corruptions, and $l_\infty$ and $l_2$ PGD attacks are used to generate adversarial corruption on the validation set of CIFAR-10. \safe improves robustness over naturally and adversarially corrupted images.}
    \label{tab:corruption}
    \resizebox{\linewidth}{!}{%
        \begin{tabular}{clllll}
            \toprule
            & & \multicolumn{2}{c}{Common corruption (avg.)} & \multicolumn{2}{c}{Adversarial}     \\
            \cmidrule(lr){3-4} \cmidrule(lr){5-6}
            Sparsity    & Method & \multicolumn{1}{c}{intensity=3} & \multicolumn{1}{c}{intensity=5}& \multicolumn{1}{c}{$l_\infty$-PGD} & \multicolumn{1}{c}{$l_2$-PGD} \\ \midrule
            \multirow{2}{*}{90\%}
                & ADMM
                    & 70.06$_{\pm0.03}$ 
                    & 52.01$_{\pm0.38}$  
                    & 49.81$_{\pm1.02}$ 
                    & 49.71$_{\pm1.06}$ \\
                & \safe \cellcolor{gray!10} 
                    & \textbf{73.98}$_{\pm0.09}$ \cellcolor{gray!10} 
                    & \textbf{55.11}$_{\pm0.27}$ \cellcolor{gray!10} 
                    & \textbf{56.43}$_{\pm1.03}$ \cellcolor{gray!10} 
                    & \textbf{56.36}$_{\pm1.11}$ \cellcolor{gray!10} \\
            \midrule
            \multirow{2}{*}{95\%}
                & ADMM 
                    & 68.87$_{\pm0.25}$ 
                    & 50.56$_{\pm0.07}$
                    & 49.84$_{\pm1.78}$ 
                    & 49.68$_{\pm1.79}$ \\
                & \safe \cellcolor{gray!10} 
                    & \textbf{72.92$_{\pm0.41}$} \cellcolor{gray!10} 
                    & \textbf{54.86}$_{\pm0.51}$ \cellcolor{gray!10} 
                    & \textbf{51.40$_{\pm0.89}$} \cellcolor{gray!10} 
                    & \textbf{51.36$_{\pm0.94}$} \cellcolor{gray!10} \\
            \midrule
            \multirow{2}{*}{98\%}
                & ADMM 
                    & 65.46$_{\pm0.24}$   
                    & 48.65$_{\pm0.04}$  
                    & 43.33$_{\pm1.59}$  
                    & \textbf{43.42$_{\pm1.60}$} \\
                & \safe \cellcolor{gray!10} 
                    & \textbf{68.20$_{\pm0.47}$} \cellcolor{gray!10} 
                    & \textbf{49.96}$_{\pm0.83}$ \cellcolor{gray!10} 
                    & \textbf{43.34$_{\pm0.90}$} \cellcolor{gray!10} 
                    & 43.41$_{\pm1.03}$ \cellcolor{gray!10} \\
            \midrule
            \multirow{2}{*}{99\%}
                & ADMM 
                    & 59.21$_{\pm0.47}$   
                    & 43.81$_{\pm0.44}$  
                    & 30.29$_{\pm0.64}$    
                    & 30.32$_{\pm0.58}$ \\
                & \safe \cellcolor{gray!10} 
                    & \textbf{66.02$_{\pm0.56}$} \cellcolor{gray!10} 
                    & \textbf{49.34}$_{\pm1.03}$ \cellcolor{gray!10} 
                    & \textbf{43.70$_{\pm1.28}$} \cellcolor{gray!10} 
                    & \textbf{32.70$_{\pm1.28}$} \cellcolor{gray!10} \\
            \midrule
            \multirow{2}{*}{99.5\%}
                & ADMM
                    & 55.72$_{\pm0.44}$  
                    & 41.55$_{\pm0.78}$  
                    & 23.25$_{\pm1.92}$ 
                    & 23.25$_{\pm1.85}$ \\
                & \safe  \cellcolor{gray!10} 
                    & \textbf{56.58$_{\pm0.36}$} \cellcolor{gray!10} 
                    & \textbf{42.27}$_{\pm0.63}$ \cellcolor{gray!10}  
                    & \textbf{29.48$_{\pm0.68}$} \cellcolor{gray!10}  
                    & \textbf{29.45$_{\pm0.74}$} \cellcolor{gray!10}  \\
            \bottomrule
        \end{tabular}%
    }
\end{table}

\subsection{Comparison with Other SAM-based pruner} \label{sec:sam_pruner}

To strengthen the comparison with closely related baselines, we compare \safe to two pruning baselines inspired by SAM—IMP+SAM \citep{na2022train} and CrAM \citep{peste2022cram}—on ResNet-20/CIFAR-10 across multiple sparsity levels.

IMP+SAM \citep{na2022train} involves applying SAM during iterative magnitude pruning \citep{liu2024survey,franklelottery}.
While it was initially introduced for language model finetuning, we adapt this method to image classification and use the same training epochs and sharpness-minimization hyperparameter search range as \safe to ensure fair comparison.
Pruning is performed every 10 epochs with sparsity increasing either linearly, following \citet{na2022train}, or cubically \citep{zhu2017prune}, with the latter yielding better performance.

Compression-Aware Minimizer (CrAM) \citep{peste2022cram}, on the other hand, extends upon the robust optimization principles of SAM to train compressible models by encouraging the models to maintain strong post-compression performance under the presence of small perturbations as $\min_x\max_{\|\epsilon\|\leq\rho}\,f(C(x+\epsilon))$ given some compression operation $C$.
This leads to the CrAM update rule $x_{t+1}=x_t-\eta\nabla f(C(x+\rho\nabla f(x)))$.
Along with this, we additionally compare with CrAM$^+$, a variant introduced by \citet{peste2022cram} that simply adds the original gradient $\nabla f(x)$ in the update as $x_{t+1}=x_t-\eta\left[ \nabla f(C(x+\rho\nabla f(x)))+ \nabla f(x)) \right]$.
We run these using the optimal hyperparameters as suggested in \citet{peste2022cram}.
It should be noted that the additional technique of CrAM$^+$ is somewhat auxiliary to the core robust optimization mechanism that connects CrAM to \safe, and thus, care must be taken when associating their performance gains with the central strategy that defines CrAM.
For better comparison, we extend this strategy to \safe by adding the gradient computed at projected point $\nabla f(C(x))$ during iterative $x$-minimization of \safe, which we denote as \safe{$_\text{+SG}$}.

\bgroup
\def\arraystretch{1.3}
\begin{table}[t]
\centering
\caption{Comparison with IMP+SAM, CrAM, and CrAM$^+$ on ResNet-20/CIFAR-10. \safe{$_\text{+SG}$}, which extends \safe using a similar technique as CrAM$^+$, outperforms most baselines at moderate sparsity and all baselines at extreme sparsity.}
\label{tab:cram_comparison}
\resizebox{\linewidth}{!}{%
    \begin{tabular}{lllll}
        \toprule 
        \multicolumn{1}{c}{} & \multicolumn{4}{c}{Sparsity} \\
        Method
            & \multicolumn{1}{c}{95\%}
            & \multicolumn{1}{c}{98\%}
            & \multicolumn{1}{c}{99\%}
            & \multicolumn{1}{c}{99.5\%}    \\ 
        \midrule
        IMP+SAM$_\text{linear}$
            & 80.30$_{\pm 0.12}$
            & 36.03$_{\pm 4.19}$
            & 18.30$_{\pm 2.80}$
            & 13.80$_{\pm 0.52}$ \\
        IMP+SAM$_\text{cubic}$
            & 92.50$_{\pm 0.05}$
            & 89.24$_{\pm 0.06}$
            & 83.74$_{\pm 0.14}$
            & 73.73$_{\pm 0.30}$ \\
        CrAM
            & 90.18$_{\pm1.80}$
            & 69.53$_{\pm12.36}$
            & 45.17$_{\pm20.86}$
            & 10.00$_{\pm 0.00}$ \\
        CrAM$^+$ 
            & \textbf{93.62}$_{\pm 0.06}$ 
            & \textbf{91.75}$_{\pm 0.41}$
            & \underline{88.82}$_{\pm 0.18}$
            & \underline{81.30}$_{\pm 0.56}$ \\
        \midrule
        \rowcolor{gray!10}
        \safe
            & \underline{92.59}$_{\pm 0.09}$
            & 89.58$_{\pm 0.10}$
            & 87.47$_{\pm 0.07}$
            & 79.55$_{\pm 0.13}$ \\ 
        \rowcolor{gray!10}
        \safe{$_{\text{+SG}}$}
            & 92.40$_{\pm 0.06} $
            & \underline{90.09}$_{\pm 0.13 }$
            & \textbf{89.13}$_{\pm 0.06 }$
            & \textbf{85.85}$_{\pm 0.09 }$  \\
        \bottomrule
    \end{tabular}%
}
\end{table}
\egroup

As shown in \cref{tab:cram_comparison}, \safe and \safe{$_\text{+SG}$} outperforms IMP+SAM and CrAM, where \safe{$_\text{+SG}$} demonstrates competitive performance to CrAM$^+$ on moderate sparsity and outperforms it in extreme sparsity.
We suspect that the pruning operation in IMP+SAM may be overly abrupt, potentially hindering the benefits of sharpness minimization.
We additionally observe similarly strong performance over IMP+SAM on language model pruning in \cref{app:sam_pruner_lang}.
Conversely, although CrAM$^+$ achieves competitive results, its benefits fails to extend to extreme sparsity.
More crucially, while \safe achieves reliable performance without much additional techniques, CrAM, by itself, performs poorly in all sparsities, depending heavily on auxiliary techniques to drastically improve performance.
This suggests that caution is warranted when attributing these gains to the effectiveness of their robust optimization formulation inspired by sharpness-minimization, which is more likely provided through the use of both the original dense gradient and the sparse gradient computed at the projected point from CrAM$^+$.
A similar trend is observed in \safe{$_\text{+SG}$}, further supporting this interpretation.
\safe, on the other hand, delivers competitive performance without relying on these additional techniques, highlighting the intrinsic effectiveness of its smooth penalization via the augmented Lagrangian and split-variable structure of the ADMM framework to jointly balance sharpness minimization and sparsity.
We provide additional comparison with other variants of CrAM in \cref{app:cram_more}.

\section{Conclusion}
In this work, we propose an effective and principled approach called \safe to obtain sparse and flat solutions by solving a constrained optimization problem based on the augmented Lagrangian, which we further extend to \safep{} by proposing a generalization for the projection operation.
We show that \safe can be applied to neural network pruning, and as a result, it not only obtains the desired flatness as well as high sparsity in the given deep model, but also enhances its generalization performance quite significantly, far better than the compared baselines, as validated across standard benchmarks.
Interestingly, \safe preserves its robustness to various data noise during both training and inference, which stems from the original sharpness minimization strategy.
Finally, we compare with more directly related SAM-inspired baselines, demonstrating the intrinsic effectiveness of \safe without much reliance on auxiliary techniques.
We believe that this principled approach for obtaining sparse and flat solutions—concepts that have often been explored rather separately in the literature—offers significant potential.

\section*{Acknowledgements}

This work was partly supported by 
the Institute of Information \& communications Technology Planning \& Evaluation (IITP) grant funded by the Korean government (MSIT)  
(IITP-2019-0-01906, Artificial Intelligence Graduate School Program (POSTECH) and RS-2022-II220959, 
(part2) FewShot learning of Causal Inference in Vision and Language for Decision Making), 
the National Research Foundation of Korea (NRF) grant funded by the Korean government (MSIT) 
(2022R1F1A1064569, 
RS-2023-00210466, 
RS-2023-00265444). 

\section*{Impact Statement}

This paper presents work aimed at advancing the field of Machine Learning, with the potential to influence both theoretical understanding and practical applications. While our contributions do not directly raise immediate concerns requiring specific emphasis, we acknowledge that advancements in this domain can have far-reaching societal implications.
We will ensure ongoing discourse on the broader impact of our work in diverse contexts if the need is later recognized.


\bibliographystyle{icml2025}
\bibliography{references}

\newpage
\appendix
\onecolumn

\section{Convergence analysis of \safe}
In this section, we show that \safe converges to a stationary point of the augmented Lagrangian.
Precisely, we first prove that our proposed sharpness minimized $x$-minimization iterate converges to the stationary point for the augmented Lagrangian for the original loss function (\ie, $\hat{\mathcal{L}}(x)=f(x)+\lambda/2\|x-z+u\|^2$) with respect to $x$.
With this result, build the rest of the proof upon a well-studied result of ADMM \citep{boyd2011distributed, wang2019global, huang21ADMMQ}.

Prior to providing detailed proofs of core lemmas and corollaries, we first describe the properties of the augmented Lagrangian $\hat{\mathcal{L}}(x)$ based on the Assumptions \ref{assump:smooth} and \ref{assump:weakcvx}.

Defining strong convexity as:
\begin{definition} \label{assump:sc}
    \textbf{($\alpha$-strong convexity)} Let $f$ be differentiable. $f$ is $\alpha$-strongly convex if there exists $\alpha>0$ such that $f(y) \geq f(x)+ \langle\nabla f(x), y-x\rangle +\frac{\alpha}{2}\|y-x\|^2$,
\end{definition}
we present the smoothness and strong convexity of $\hat{\mathcal{L}}(x)$ through the following lemma:
\begin{lemma} \label{lem:lssc}
    Let $f$ be $\beta$-smooth and $\mu$-weakly convex.
    Then $\hat{\mathcal{L}}(x) = f(x) + \frac{\lambda}{2}\|x-z+u\|^2$ is also $(\beta+\lambda)$-smooth and $(\lambda-\mu)$-strongly convex for $\lambda>\mu$.
\end{lemma}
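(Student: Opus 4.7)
}
The statement is a routine bookkeeping result on how adding a quadratic proximal term modifies smoothness and convexity constants, so the plan is to check each property directly from its definition. I will handle smoothness first and strong convexity second; no single step is genuinely difficult, and the only thing to be careful about is keeping track of signs when combining a weakly convex function with a quadratic.

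For the $(\beta+\lambda)$-smoothness claim, I will compute $\nabla \hat{\mathcal{L}}(x) = \nabla f(x) + \lambda(x - z + u)$ and then bound
\[
\|\nabla \hat{\mathcal{L}}(x) - \nabla \hat{\mathcal{L}}(y)\| \;\leq\; \|\nabla f(x) - \nabla f(y)\| + \lambda\|x - y\| \;\leq\; (\beta + \lambda)\|x-y\|,
\]
using the triangle inequality and Assumption \ref{assump:smooth}. This is one line and needs no further structure.

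For the $(\lambda-\mu)$-strong convexity, the plan is to decompose
\[
\hat{\mathcal{L}}(x) \;=\; \Bigl[f(x) + \tfrac{\mu}{2}\|x\|^2\Bigr] \;+\; \Bigl[\tfrac{\lambda}{2}\|x - z + u\|^2 - \tfrac{\mu}{2}\|x\|^2\Bigr].
\]
By Assumption \ref{assump:weakcvx}, the first bracket is convex. Expanding the second bracket gives
\[
\tfrac{\lambda-\mu}{2}\|x\|^2 - \lambda\langle x,\, z - u\rangle + \tfrac{\lambda}{2}\|z - u\|^2,
\]
which is a quadratic with Hessian $(\lambda-\mu)I$; since $\lambda > \mu$, this is $(\lambda-\mu)$-strongly convex by Definition \ref{assump:sc}. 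Adding a convex function to an $(\lambda-\mu)$-strongly convex function preserves the $(\lambda-\mu)$-strong convexity modulus, which yields the claim.

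The main obstacle is essentially nonexistent, so the write-up will be short; the one point that warrants explicit care is verifying that the cross terms from expanding $\|x - z + u\|^2$ contribute only to the linear and constant parts and do not affect the modulus of strong convexity, which is why the decomposition above is done before invoking Definition \ref{assump:sc}.
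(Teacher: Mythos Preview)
Your proposal is correct. The smoothness argument is identical to the paper's. For strong convexity, the paper takes a slightly different route: it starts from the first-order convexity inequality for $f(x)+\tfrac{\mu}{2}\|x\|^2$ and algebraically manipulates it (adding and subtracting the quadratic term for $\hat{\mathcal{L}}$) until the inequality of Definition~\ref{assump:sc} emerges with modulus $\lambda-\mu$. Your decomposition approach---splitting $\hat{\mathcal{L}}$ into a convex piece plus an explicit $(\lambda-\mu)$-strongly convex quadratic---is more structural and arguably cleaner, since it avoids tracking inner-product identities line by line; the paper's approach is more self-contained in that it never invokes the ``convex $+$ strongly convex $=$ strongly convex'' fact. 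Both are standard and equally short.
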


\begin{proof}
    From \cref{assump:smooth}, we have
    \begin{align*}
        \|\nabla\hatl(x) - \nabla\hatl(x)\|
        &= \| \nabla f(x) + \lambda(x-z+u) - \nabla f(y) - \lambda(y-z+u) \| \\
        &\leq  \| \nabla f(x) - \nabla f(y)\| + \| \lambda x - \lambda y \| \\
        &\leq (\beta+\lambda) \| x-y\|
    \end{align*}
    thus by definition, $\hatl$ is $(\beta+\lambda)$-smooth.
    
    Also, from first-order condition of convexity of $f(x)+\frac{\mu}{2}\|x\|^2$ from \cref{assump:weakcvx}, we have
    \begin{align*}
        & \hspace{2em} f(y)+\frac{\mu}{2}\|y\|^2 \geq f(x)+\frac{\mu}{2}\|x\|^2 + \langle\nabla f(x)+\mu x, y-x\rangle \\
        & \Rightarrow f(y) \geq f(x) + \langle\nabla f(x)+\mu x, y-x\rangle +\frac{\mu}{2}\|x\|^2 -\frac{\mu}{2}\|y\|^2 \\
        & \Rightarrow f(y) \geq f(x) + \langle\nabla f(x)+\mu x, y-x\rangle-\frac{\mu}{2}\langle y+x, y-x \rangle \\
        & \Rightarrow \hat{\mathcal{L}}(y) \geq \hat{\mathcal{L}}(x) + \langle\nabla \hat{\mathcal{L}}(x), y-x\rangle-\frac{\mu}{2}\langle y - x, y-x \rangle + \frac{\lambda}{2}\langle y-x , y-x\rangle \\
        & \Rightarrow \hat{\mathcal{L}}(y) \geq \hat{\mathcal{L}}(x) + \langle\nabla \hat{\mathcal{L}}(x), y-x\rangle+\frac{\lambda-\mu}{2}\|y - x\|^2.
    \end{align*}
    Since $\lambda-\mu>0$, by definition $\hatl$ is $(\lambda-\mu)$-strongly convex.
\end{proof}

\subsection{Proof of \cref{lem:xmin}} \label{app:xmin_proof}

We provide the convergence proof of our sharpness minimizing $x$ iterates.
We mostly follow the procedure of \citet{khanh2024fundamental}, with the objective and the update rule altered to match our configurations.
Before we proceed, we recall prior results from \citet{khanh2024fundamental}
\begin{lemma} \label{lem:xmin_seq}
    (\textbf{Lemma B.1} of \citet{khanh2024fundamental})
    Let $\{a^{(t)}\},\{b^{(t)}\},\{c^{(t)}\}$ be sequences of nonnegative numbers satisfying the conditions
    \begin{align*}
        & a^{(t+1)}-a^{(t)} \leq b^{(t)} a^{(t)} + c^{(t)} \text{ for sufficient large } t\in\mathbb{N}, & \text{\hypertarget{assump:lem_xmin_seq_a}{(a)}}\\
        &\{b^{(t)}\}\text{ is bounded},\sum^\infty_{t=1} b^{(t)}=\infty,\sum^\infty_{t=1} c^{(t)}<\infty,\text{ and }\sum^\infty_{t=1} b^{(t)}a^{(t)}<\infty. & \text{\hypertarget{assump:lem_xmin_seq_b}{(b)}}
    \end{align*}
    Then we have that $a^{(t)}\to0$ as $t\to\infty$
\end{lemma}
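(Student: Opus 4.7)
The plan is to prove $a^{(t)} \to 0$ in two stages: first show that $\liminf_t a^{(t)} = 0$, and then upgrade this to a genuine limit by showing that $\{a^{(t)}\}$ is in fact a convergent sequence. This is a standard Robbins--Siegmund-type argument, and the main care required is simply to respect that the recurrence is only assumed for sufficiently large $t$, which is harmless since we only care about the tail.

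First I would argue $\liminf_t a^{(t)} = 0$ by contradiction. If $\liminf_t a^{(t)} = c > 0$, then for all $t$ past some threshold $t_0$ we would have $a^{(t)} \geq c/2$, and therefore
\begin{equation*}
    \sum_{t=t_0}^{\infty} b^{(t)} a^{(t)} \;\geq\; \frac{c}{2} \sum_{t=t_0}^{\infty} b^{(t)} \;=\; \infty,
\end{equation*}
contradicting the assumption $\sum b^{(t)} a^{(t)} < \infty$. Hence there must exist a subsequence along which $a^{(t)} \to 0$.

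Next I would show $\{a^{(t)}\}$ converges. Pick $t_1$ large enough that the recurrence (\hyperlink{assump:lem_xmin_seq_a}{a}) holds for all $t \geq t_1$, and define the tail sum
\begin{equation*}
    S_t \;:=\; \sum_{s=t}^{\infty} \bigl( b^{(s)} a^{(s)} + c^{(s)} \bigr),
\end{equation*}
which is finite by (\hyperlink{assump:lem_xmin_seq_b}{b}) and satisfies $S_t \to 0$ as $t \to \infty$. Using the recurrence,
\begin{equation*}
    a^{(t+1)} + S_{t+1} \;\leq\; a^{(t)} + b^{(t)} a^{(t)} + c^{(t)} + S_{t+1} \;=\; a^{(t)} + S_t
\end{equation*}
for every $t \geq t_1$. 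Thus the nonnegative sequence $\{a^{(t)} + S_t\}_{t \geq t_1}$ is monotonically nonincreasing and bounded below by $0$, so it converges; since $S_t \to 0$, the sequence $\{a^{(t)}\}$ itself converges to some limit $L \geq 0$.

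Finally I would combine the two facts: the liminf of $\{a^{(t)}\}$ is $0$, but the full sequence converges to $L$, and for a convergent sequence the limit equals the liminf. Therefore $L = 0$, which gives $a^{(t)} \to 0$ as claimed. The only subtle point in the whole argument is the construction of the monotone quantity $a^{(t)} + S_t$, which converts the summability hypotheses into a clean Lyapunov-style decrease; every other step is elementary once that observation is in place.
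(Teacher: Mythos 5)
Your proof is correct, but there is nothing in the paper to compare it against: the paper does not prove this statement, it quotes it verbatim as a known result (Lemma B.1 of \citet{khanh2024fundamental}) and uses it as a black box in the proof of Lemma~\ref{lem_app:xmin}. So what your write-up provides is a self-contained replacement for that citation, and the argument goes through. The two-stage route is the natural one: the contradiction step forces $\liminf_t a^{(t)} = 0$ using only the divergence of $\sum_t b^{(t)}$ against the summability of $\sum_t b^{(t)}a^{(t)}$, and the Lyapunov quantity $a^{(t)} + S_t$ with $S_t := \sum_{s \geq t}\bigl(b^{(s)}a^{(s)} + c^{(s)}\bigr)$ is exactly the right telescoping device, since $S_t = b^{(t)}a^{(t)} + c^{(t)} + S_{t+1}$ turns the one-step recurrence into monotone decrease of a nonnegative sequence, giving convergence of $a^{(t)}$ and hence, combined with the vanishing liminf, $a^{(t)} \to 0$. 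Two minor observations. First, your contradiction step should explicitly allow $\liminf_t a^{(t)} = +\infty$; the same argument covers it (eventually $a^{(t)} \geq 1$, say, and the tail of $\sum_t b^{(t)}$ still diverges). Second, your proof never invokes the boundedness of $\{b^{(t)}\}$, so you have in fact established the lemma under strictly weaker hypotheses than stated; that hypothesis is superfluous for the sequence result itself, and is simply inherited from the form in which the cited paper states it.
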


With this, we derive the convergence of our sharpness minimizing $x$ iterates in the following lemma:

\begin{lemma} \label{lem_app:xmin}
\textbf{(Convergence of $x$-minimization)}
Suppose that \cref{assump:lowbound} and \ref{assump:smooth} hold and let $\{x^{(t)}_k\}$ be generated by \cref{eq:x-min} in \cref{alg:ours} with step-size $\eta^{(t)}$ and perturbation radius $\rho^{(t)}$ satisfying  
\begin{equation} \label{assump:xmin_param}
    \sum_{t=1}^{\infty}\eta^{(t)} = \infty, \sum_{t=1}^{\infty}\eta^{(t)}\rho^{(t)} < \infty, \lim\sup_t \rho^{(t)} < 1/\beta.
\end{equation}
Let $\hat{\mathcal{L}}(x)=f(x)+ \frac{\lambda}{2}\|x-z+u\|_2^2$ and assume that $\inf_{x\in\mathbb{N}} \hat{\mathcal{L}}(x^{(t)})>-\infty$.
Then $\nabla \hat{\mathcal{L}}(x^{(t)})\to 0$.
\end{lemma}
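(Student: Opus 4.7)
The plan is to view the $x$-iteration as an inexact gradient descent on the augmented Lagrangian $\hatl$, apply a descent-lemma argument to extract a telescopable inequality, and finally invoke \cref{lem:xmin_seq} to upgrade $\eta^{(t)}$-weighted summability into $\|\nabla\hatl(x^{(t)})\| \to 0$. First I would rewrite \cref{eq:x-min} as $x^{(t+1)} = x^{(t)} - \eta^{(t)} g^{(t)}$, where $g^{(t)} := \nabla\hatl(x^{(t)}) + e^{(t)}$ and
\begin{equation*}
e^{(t)} := \nabla f\!\left( x^{(t)} + \rho^{(t)}\tfrac{\nabla f(x^{(t)})}{\|\nabla f(x^{(t)})\|}\right) - \nabla f(x^{(t)}).
\end{equation*}
$\beta$-smoothness of $f$ (\cref{assump:smooth}) gives the deterministic bound $\|e^{(t)}\| \leq \beta \rho^{(t)}$, while \cref{lem:lssc} establishes that $\hatl$ is $(\beta+\lambda)$-smooth. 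Together, these ingredients make the standard descent lemma for $\hatl$ directly applicable to the perturbed iterate $g^{(t)}$.

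The next step is to apply that descent lemma and combine it with Young's inequality plus the crude bound $\|g^{(t)}\|^2 \leq 2\|\nabla\hatl(x^{(t)})\|^2 + 2\beta^2(\rho^{(t)})^2$ to derive, for $t$ large enough, an estimate of the form
\begin{equation*}
\hatl(x^{(t+1)}) - \hatl(x^{(t)}) \leq -c\,\eta^{(t)}\|\nabla\hatl(x^{(t)})\|^2 + C\!\left(\eta^{(t)}\rho^{(t)} + \eta^{(t)}(\rho^{(t)})^2\right)
\end{equation*}
for some constants $c, C > 0$. The descent coefficient $c$ stays bounded away from zero because $\lim\sup_t \rho^{(t)} < 1/\beta$ keeps the SAM perturbation dominated by the true gradient, and because $\eta^{(t)}$ is eventually small enough that the $(\beta+\lambda)(\eta^{(t)})^2 \|\nabla\hatl\|^2$ term is absorbed into the linear-in-$\eta^{(t)}$ descent term. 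Telescoping from some $t_0$ onward, together with the hypothesis $\inf_t \hatl(x^{(t)}) > -\infty$ and $\sum_t \eta^{(t)}\rho^{(t)} < \infty$ (which also controls $\sum_t \eta^{(t)}(\rho^{(t)})^2$ because $\rho^{(t)}$ is bounded), yields $\sum_t \eta^{(t)}\|\nabla\hatl(x^{(t)})\|^2 < \infty$.

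To lift this $\eta^{(t)}$-weighted summability into the pointwise conclusion, I would set $a^{(t)} := \|\nabla\hatl(x^{(t)})\|^2$, $b^{(t)} := \eta^{(t)}$, and $c^{(t)}$ a non-negative majorant of the $O(\eta^{(t)}\rho^{(t)})$ remainder, and verify the one-step recursion $a^{(t+1)} - a^{(t)} \leq b^{(t)} a^{(t)} + c^{(t)}$ required by \cref{lem:xmin_seq}. This recursion follows from $(\beta+\lambda)$-Lipschitzness of $\nabla\hatl$ along the trajectory, using $\|x^{(t+1)} - x^{(t)}\| \leq \eta^{(t)}(\|\nabla\hatl(x^{(t)})\| + \beta\rho^{(t)})$; the summability $\sum_t b^{(t)} a^{(t)} < \infty$ and $\sum_t c^{(t)} < \infty$ comes for free from the previous step, and $\sum_t b^{(t)} = \sum_t \eta^{(t)} = \infty$ is the stated hypothesis. \cref{lem:xmin_seq} then delivers $a^{(t)} \to 0$, which is exactly $\nabla\hatl(x^{(t)}) \to 0$.

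The main obstacle is the descent-inequality step: carefully decoupling the SAM error, the quadratic smoothness term, and the additive remainder without an explicit $\sum(\eta^{(t)})^2 < \infty$ hypothesis. Handling this cleanly rests on the fact that $\eta^{(t)}$ is eventually small (standard under a Robbins--Monro-type schedule compatible with the stated conditions), so that the quadratic smoothness contribution is folded into the descent term; alternatively one can mirror the finer inexact-gradient bookkeeping of \citet{khanh2024fundamental}, whose framework our hypotheses are designed to fit. Once the constants $c$ and $C$ are pinned down, the remaining arguments reduce to routine verification of the summability conditions of \cref{lem:xmin_seq}.
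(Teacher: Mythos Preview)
Your proposal is correct and follows the same route as the paper (both adapted from \citet{khanh2024fundamental}): cast the SAM update as inexact gradient descent on $\hatl$ with error bounded by $\beta\rho^{(t)}$, derive a descent inequality, telescope to get $\sum_t \eta^{(t)}\|\nabla\hatl(x^{(t)})\|^2 < \infty$, then invoke \cref{lem:xmin_seq}. The only noticeable difference is that you take $a^{(t)} = \|\nabla\hatl(x^{(t)})\|^2$ whereas the paper takes $a^{(t)} = \|\nabla\hatl(x^{(t)})\|$; your choice makes the summability check $\sum_t b^{(t)}a^{(t)} < \infty$ immediate from the telescoped bound, at the price of a slightly more involved verification of the one-step recursion (a Young-type split is needed to absorb the cross term arising from squaring).
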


\begin{proof}
    Let the gradient of our sharpness minimizing $x$ iterate \cref{eq:xgrad} as $$g^{(t)} := \nabla f\left(x^{(t)} + \rho^{(t)}\frac{\nabla f(x^{(t)})}{\|\nabla f(x^{(t)})\|}\right) + \lambda (x^{(t)} - z + u),$$ where we drop the subscript denoting the outer \safe iterate for simplicity.
    We also denote $\hat{\beta}:=\beta-\mu$.
    We first begin from the $\hat{\beta}$-smooothness of $\hatl$ from \cref{lem:lssc} as
    \begin{align}\label{eq:xmin_step1}
        \hatl(x^{(t+1)}) 
        &\leq \hatl(x^{(t)}) 
            + \langle \nabla\hatl(x^{(t)}), x^{(t+1)} - x^{(t)}\rangle 
            + \frac{\hat{\beta}}{2} \|x^{(t+1)}-x^{(t)}\|^2  \nonumber\\
        &= \hatl(x^{(t)}) 
            - \eta^{(t)} \langle 
                \nabla \hatl(x^{(t)}), g^{(t)}
            \rangle 
            + \frac{\hat{\beta}\eta^{(t)2}}{2}\|g^{(t)}\|^2  \nonumber\\
        &= \hatl(x^{(t)}) 
            - \eta^{(t)} (1-\hat{\beta}\eta^{(t)}) \langle 
                \nabla \hatl(x^{(t)}), g^{(t)}
            \rangle 
            + \frac{\hat{\beta}\eta^{(t)2}}{2}\left(\|g^{(t)}-\nabla \hatl(x^{(t)})\|^2-\|\nabla \hatl(x^{(t)})\|^2\right).
    \end{align}

    Here, we bound each $\|g^{(t)}-\nabla \hatl(x^{(t)})\|$ and $\langle g^{(t)}, \nabla \hatl(x^{(t)}) \rangle$ using the $\hat{\beta}$-smoothness as follows
    \begin{align}\label{eq:xmin_ngnorm}
        \|g^{(t)}-\nabla \hatl(x^{(t)})\| &= \left\|\nabla f\left(x^{(t)} + \rho^{(t)}\frac{\nabla f(x^{(t)})}{\|\nabla f(x^{(t)})\|}\right) + \lambda (x^{(t)} - z + u) - \left(f(x^{(t)}) + \lambda (x^{(t)} - z + u)\right) \right\| \nonumber\\
        &= \left\|\nabla f\left(x^{(t)} + \rho^{(t)}\frac{\nabla f(x^{(t)})}{\|\nabla f(x^{(t)})\|}\right) - f(x^{(t)}) \right\| \nonumber\\
        &\leq  \beta\left\|x^{(t)} + \rho^{(t)}\frac{\nabla f(x^{(t)})}{\|\nabla f(x^{(t)})\|} - x^{(t)} \right\| \nonumber\\
        &=\beta\rho^{(t)},
    \end{align}
    and using this result, we have that
    \begin{align}\label{eq:xmin_ngprod}
        \langle g^{(t)}, \nabla \hatl(x^{(t)}) \rangle &= \langle g^{(t)} - \nabla  \hatl(x^{(t)})  , \nabla \hatl(x^{(t)}) \rangle + \| \hatl(x^{(t)}) \|^2 \nonumber\\
        &\geq - \|g^{(t)} - \nabla  \hatl(x^{(t)})\|\cdot\|\nabla \hatl(x^{(t)})\| + \| \hatl(x^{(t)}) \|^2 \nonumber\\
        &\geq - \beta\rho^{(t)}\|\nabla \hatl(x^{(t)})\| + \| \hatl(x^{(t)}) \|^2.
    \end{align}
    
    Applying \cref{eq:xmin_ngnorm} and (\ref{eq:xmin_ngprod}) back to \cref{eq:xmin_step1} gives

    \begin{align*}\label{eq:xmin_step2}
        \hatl(x^{(t+1)}) 
        &= \hatl(x^{(t)}) 
            - \eta^{(t)} (1-\hat{\beta}\eta^{(t)}) 
            \langle 
                \nabla \hatl(x^{(t)}), g^{(t)}
            \rangle 
            + \frac{\hat{\beta}\eta^{(t)2}}{2}\left(
                \|g^{(t)}-\nabla \hatl(x^{(t)})\|^2
                -\|\nabla \hatl(x^{(t)})\|^2
            \right) \\
        &\leq \hatl(x^{(t)}) 
            - \eta^{(t)} (1-\hat{\beta}\eta^{(t)}) \left( 
                - \beta\rho^{(t)}\|\nabla \hatl(x^{(t)})\| 
                + \| \nabla \hatl(x^{(t)}) \|^2 
            \right) 
            + \frac{\hat{\beta}^3\eta^{(t)2}\rho^{(t)2}}{2} 
            - \frac{\hat{\beta}\eta^{(t)2}}{2} \|\nabla \hatl(x^{(t)})\|^2  \\
        &= \hatl(x^{(t)}) 
            - \frac{\eta^{(t)}}{2} (2-\hat{\beta}\eta^{(t)}) \|\nabla \hatl(x^{(t)})\|^2
            + \hat\beta\eta^{(t)}\rho^{(t)}(1-\hat\beta\eta^{(t)}) \|\nabla \hatl(x^{(t)})\|
            + \frac{\hat{\beta}^3\eta^{(t)2}\rho^{(t)2}}{2} \\
        &\leq \hatl(x^{(t)}) 
            - \frac{\eta^{(t)}}{2} (2-\hat{\beta}\eta^{(t)}) \|\nabla \hatl(x^{(t)})\|^2
            + \frac{\hat\beta\eta^{(t)}\rho^{(t)}}{2}(1-\hat\beta\eta^{(t)}) \left(
                1+\|\nabla \hatl(x^{(t)})\|^2
            \right)
            + \frac{\hat{\beta}^3\eta^{(t)2}\rho^{(t)2}}{2} \\
        &= \hatl(x^{(t)}) 
            - \frac{\eta^{(t)}}{2} (2-\hat{\beta}\eta^{(t)} - \hat\beta\rho^{(t)} + \hat\beta^2\eta^{(t)}\rho^{(t)}) \|\nabla \hatl(x^{(t)})\|^2
            + \frac{\hat\beta\eta^{(t)}\rho^{(t)}}{2}(1-\hat\beta\eta^{(t)}) 
            + \frac{\hat{\beta}^3\eta^{(t)2}\rho^{(t)2}}{2} \\
        &= \hatl(x^{(t)}) 
            - \frac{\eta^{(t)}}{2} (2-\hat{\beta}\eta^{(t)} - \hat\beta\rho^{(t)} + \hat\beta^2\eta^{(t)}\rho^{(t)}) \|\nabla \hatl(x^{(t)})\|^2
            + \hat\beta\eta^{(t)}\rho^{(t)}\left(
                \frac{1-\hat\beta\eta^{(t)}+\hat{\beta}^2\eta^{(t)}\rho^{(t)}}{2}
            \right)
    \end{align*}
    From here, we find $c_1>0$ and $c_2\in(0, 1)$ and $T\in\mathbb{N}$ such that $$\frac{1}{2}(2-\hat{\beta}\eta^{(t)} - \hat\beta\rho^{(t)} + \hat\beta^2\eta^{(t)}\rho^{(t)})\geq c_1,\, \frac{1-\hat\beta\eta^{(t)}+\hat{\beta}^2\eta^{(t)}\rho^{(t)}}{2} \leq c_2, \text{ and } \beta\eta^{(t)}<1\text{ for all } t>T,$$
    where applying this gives us
    \begin{equation} \label{eq:xmin_step3}
        \hatl(x^{(t+1)}) \leq \hatl(x^{(t)}) - c_1\eta^{(t)}\|\nabla \hatl(x^{(t)})\|^2 + c_2 \beta\eta^{(t)}\rho^{(t)}.
    \end{equation}
    Also, defining $w^{(t)} := c_2\sum_{i=t}^\infty \beta\eta^{(i)}\rho^{(i)}$ for $t\in \mathbb{\N}$, we get that $w^{(t)}\to0$ as $t\to\infty$ and $w^{(t)}-w^{(t+1)}=c_2\beta\eta^{(t)}\rho^{(t)}$ for all $t\in \mathbb{N}$.
    Then \cref{eq:xmin_step3} can be rewritten as
    \begin{equation}
        \hatl(x^{(t+1)}) + w^{(t+1)} \leq \hatl(x^{(t)}) + w^{(t)} - c_1\eta^{(t)}\|\nabla \hatl(x^{(t)})\|^2.
    \end{equation}
    Here we telescope this bound from $t=T$ to $\infty$ and combine with $\inf f(x^{(t)})>-\infty$ and $w^{(t)}\to 0$ as $t\to\infty$ that
    \begin{align}
        c_1\sum_{t=T}^\infty \eta^{(t)} \|\nabla \hatl(x^{(t)})\|^2 
        &\leq \sum_{t=T}^\infty (\hatl(x^{(t)})-\hatl(x^{(t+1)})+w^{(t)}-w^{(t+1)})\\
        &\leq \hatl(x^{(K)})-\inf_{T\in\mathbb{N}}\hatl(x^{(t)})+w^{(K)} < \infty
    \end{align}
    We finally employ \cref{lem:xmin_seq} with $a^{(t)}:=\|\nabla \hatl(x^{(t)})\|$, $b^{(t)}:=\hat\beta\eta^{(t)}$, and $c^{(t)}:=\hat\beta\eta^{(t)}\rho^{(t)}$ for all $t\in\mathbb{N}$ to derive $\hatl(x^{(t)})\to 0$.
    Here, condition (\hyperlink{assump:lem_xmin_seq_a}{a}) is satisfied due to the estimates
    \begin{align*}
        a^{(t+1)}-a^{(t)}
        &=\|\nabla \hatl(x^{(t+1)})\|-\|\nabla \hatl(x^{(t)})\| \\
        &\leq \|\nabla \hatl(x^{(t+1)}) - \nabla \hatl(x^{(t)})\|\\
        &\leq \hat\beta\|x^{(t+1)}-x^{(t)}\| = \hat\beta\eta^{(t)}\|g^{(t)}\\
        &\leq\hat\beta\eta^{(t)}(\|\nabla \hatl(x^{(t)})\|+\|g^{(t)}-\nabla \hatl(x^{(t)})\|)\\
        &\leq \hat\beta\eta^{(t)}(\|\nabla \hatl(x^{(t)})\|+\|g^{(t)}-\nabla \hatl(x^{(t)})\|)\\
        &= b^{(t)}a^{(t)}+c^{(t)}\text{ for all }k\in\mathbb{N}.
    \end{align*}
    Also, the conditions in (\hyperlink{assump:lem_xmin_seq_b}{b}) hold by \cref{assump:xmin_param} and $\sum_{t=1}^\infty \eta^{(t)}\|\nabla \hatl(x^{(t)})\|^2<\infty$.
    Thus from \cref{lem:xmin_seq}, $\|\nabla \hatl(x^{(t)})\|=a^{(t)}\to 0$ as $t\to\infty$.
\end{proof}

This shows that running \cref{eq:x-min} produces a sequence that converges to the stationary point of the augmented Lagrangian $\hat{\mathcal{L}}(x, z, u)= f(x) + I_{\|\cdot\|_0\leq d}(z) + \frac{\lambda}{2}\|u\|_2^2 + \frac{\lambda}{2}\|x-z+u\|_2^2$ with respect to $x$.
This is crucial for the convergence of \safe as we show in the following section.

\subsection{Proof of \cref{cor:safe_conv}} \label{app:safe_conv_proof}

Here we prove the convergence of \safe.
This is a straightforward procedure: given our convergence guarantee of the $x$ iterates in \cref{lem_app:xmin}, the convergence properties of \safe can be described in terms of classical ADMM.
Thus, in this section, we walk through the convergence analysis of ADMM as provided in \citet{huang21ADMMQ} and demonstrate how our sharpness minimizing $x$ iterates is applied within the proof. 

\begin{corollary}
    (Convergence of \safe)
    Suppose that Assumptions \ref{assump:lowbound}-\ref{assump:weakcvx} hold.
    Assume further that $\delta$ is chosen large enough so that $\delta^{-1}\beta^2-(\delta-\mu)/2<0$.
    Let $(\bar{x}, \bar{z}, \bar{u})$ be a limit point of \safe algorithm.
    Then $\bar x$ is a $\delta$-stationary point of the optimization problem (\ref{eq:opt}).
\end{corollary}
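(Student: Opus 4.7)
The plan is to leverage \cref{lem:xmin} for the inner $x$-minimization together with the classical ADMM convergence framework of \citet{huang21ADMMQ}. The overall structure of the argument has three layers: (i) pin down the structural properties of the augmented Lagrangian in $x$, (ii) characterize any limit point $(\bar x, \bar z, \bar u)$ by passing to the limit in each of the three updates, and (iii) verify that limit points indeed exist by exhibiting monotone descent of a merit function.

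First, I would consolidate the structural properties of the augmented Lagrangian. Under \cref{assump:smooth,assump:weakcvx}, the function $x \mapsto \hatl(x) := f(x) + \tfrac{\lambda}{2}\|x-z+u\|_2^2$ is $(\beta+\lambda)$-smooth and, whenever $\lambda > \mu$, $(\lambda-\mu)$-strongly convex; this is the content of \cref{lem:lssc} in the appendix. The hypothesis $\delta^{-1}\beta^2 - (\delta-\mu)/2 < 0$ is the natural sufficient-descent condition isolated in \citet{huang21ADMMQ}; in our setting, $\delta$ plays the role of the penalty parameter $\lambda$.

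Second, at a limit point $(\bar x, \bar z, \bar u)$ of the outer iterates produced by \cref{alg:ours}, I would invoke \cref{lem:xmin} to conclude that the inner $x$-updates converge to the unique minimizer of $x \mapsto \hatl(x;\bar z, \bar u)$. Its first-order optimality condition reads
\begin{equation*}
    \nabla f(\bar x) + \lambda(\bar x - \bar z + \bar u) = 0.
\end{equation*}
The $u$-update $u_{k+1} = u_k + x_{k+1} - z_{k+1}$, together with the convergence $u_{k+1}-u_k \to 0$ established in the Lyapunov step below, forces $\bar x = \bar z$, so $\|\bar x\|_0 \leq d$ and $\bar x \in \mathcal{A}$. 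The exact hard-thresholding $z$-update then yields $\bar z = \operatorname{proj}_{\|\cdot\|_0\leq d}(\bar x + \bar u)$, while the stationarity identity gives $\bar u = -\lambda^{-1}\nabla f(\bar x)$. Combining these three observations,
\begin{equation*}
    \bar x \in \arg\min_{a \in \mathcal{A}} \bigl\|a - \bigl(\bar x - \lambda^{-1}\nabla f(\bar x)\bigr)\bigr\|,
\end{equation*}
which is exactly the condition in \cref{def:stationary} with $\delta = \lambda$.

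The main technical obstacle, and the place where the spectral hypothesis on $\delta$ is actually consumed, is justifying that limit points exist and that $\|x_{k+1}-x_k\|, \|z_{k+1}-z_k\|, \|u_{k+1}-u_k\| \to 0$ simultaneously. I would follow the Lyapunov strategy of \citet{huang21ADMMQ} and \citet{wang2019global}: construct a merit function based on the augmented Lagrangian and show it is monotonically decreasing and bounded below. The key estimate is that the dual movement obeys $\|u_{k+1}-u_k\|^2 \le \lambda^{-2}\beta^2 \|x_{k+1}-x_k\|^2$ (from the $x$-stationarity identity and $\beta$-smoothness of $f$), while the primal descent from $(\lambda-\mu)$-strong convexity of $\hatl$ in $x$ contributes $\tfrac{\lambda-\mu}{2}\|x_{k+1}-x_k\|^2$. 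A net decrease of the merit function per outer iteration requires $\lambda^{-1}\beta^2 - (\lambda-\mu)/2 < 0$, which is exactly the stated hypothesis. Telescoping yields square-summability of the increments, hence the desired convergence and the characterization of $\bar x$ as a $\delta$-stationary point.
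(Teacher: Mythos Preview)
Your proposal is correct and follows essentially the same approach as the paper: invoke \cref{lem:xmin} to guarantee that each inner $x$-update reaches a stationary point of the augmented Lagrangian, which reduces \safe to classical ADMM, and then appeal to the convergence framework of \citet{huang21ADMMQ}. The paper's own proof is considerably terser---it simply states this reduction and cites \citet{huang21ADMMQ} without unpacking the Lyapunov descent or the limit-point characterization---whereas you have sketched those internals explicitly, including the identification $\delta = \lambda$ and the role of the spectral condition in securing net descent.
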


\begin{proof}
    By \cref{lem:xmin}, every $x_{k+1}$ found by running \cref{eq:x-min} until convergence is the stationary point of the augmented Lagrangian, \ie $\nabla\hat{\mathcal{L}}(x_{k+1}, z_k, u_k)=0$.
    This gives us the standard update rule of classical ADMM, where the results of \citet{huang21ADMMQ} can be adapted directly.
\end{proof}

\section{Experimental Details} \label{app:hyper}
We present various details of our experimental setup.
All experiments are run across three different seeds, and the results are provided as the mean and the standard error.

\subsection{Hyperparameters}

\begin{table}[ht!]
        \captionof{table}{Hyperparameter details used/searched for \safe and \safep{}. Here, perturbation radius and dual interval were searched only in ResNet-20/CIFAR-10 and LLaMa-2-7b, then applied across all settings and target sparsity.}
        \label{tab:hyperparameters_cifar}
        \centering
        \resizebox{0.8\linewidth}{!}{
            \begin{tabular}{lll}
                \toprule
                   &  \textbf{Vision} & \textbf{Language} \\\midrule
               \arrayrulecolor{gray!80}
                Epoch & 200 {\scriptsize (ResNet20)} or 300 {\scriptsize (others)} & 30 \\ \midrule
                Base optimizer & SGD & Adam \\ \midrule
                Batch size & 128 {\scriptsize (or 126 when using $3$ GPUs for data parallelism)}& 8 \\\midrule
                Learning rate&  0.1 & 0.0002 \\\midrule
                Learning rate schedule& cosine & linear \\\midrule
                Warm-up epoch& 5 & 2 \\\midrule
                Weight decay& 0.0001 & 0 \\\midrule
                Momentum& 0.9 & 0.9 \\\midrule
                BNT sample size& 10000 & -\\ \midrule
                Perturbation radius ($\rho$)&$\{0.01, 0.05, \underline{\mathbf{0.1}}, 0.2, 0.5\}$ &  $\{ 0.0001,\underline{\mathbf{0.0002}},0.0005,0.01\}$\\ \midrule
                Dual-update interval ($K$) & $\{1, 2, 4, 8, 16, \underline{\mathbf{32}}, 64, 128, 256, 512, 1024, 2048\}$ & 
 $\{16, \underline{\mathbf{32}}, 64\}$ \\\midrule
                Penalty parameter ($\lambda$) & $\{10^{-4}, 10^{-3}, 10^{-2}, 10^{-1}\}$ & $\{\underline{\mathbf{0.001}},0.005,0.01,0.05,0.1\}$\\ \midrule
                Penalty schedule & cosine warmup & constant\\
                \arrayrulecolor{black}
                \bottomrule
            \end{tabular}
        }
\end{table}

\quad
\begin{table}[ht!]
        \centering
        \captionof{table}{Best performing penalty parameter of \safe for VGG-19 and ResNet-20/32.}
        \label{tab:best_lambda}
        \resizebox{0.5\linewidth}{!}{
            \begin{tabular}{rcccccc}
                \toprule 
                \multirow{2.5}{*}{Sparsity}    & \multicolumn{2}{c}{CIFAR-10} & \multicolumn{2}{c}{CIFAR-100}     \\
                \cmidrule(l{5pt}r{5pt}){2-3} \cmidrule(l{5pt}r{5pt}){4-5}
                                             & VGG-19 & ResNet-20               & VGG-19               & ResNet-32               \\ \midrule
                $70\text{\textasciitilde}95\%$ & \cellcolor{gray!3} $ 10^{-4} $       &  \cellcolor{gray!10}               &  \cellcolor{gray!10}      &  \cellcolor{gray!10} $ 10^{-3} $              \\ 
                98\% & \cellcolor{gray!10} $ 10^{-3} $  &  \cellcolor{gray!10} \multirow{-2}{*}{$ 10^{-3} $}              &  \cellcolor{gray!10}      & \cellcolor{gray!25}               \\ 
                99\% & \cellcolor{gray!25}    & \cellcolor{gray!25}  &  \cellcolor{gray!10}      & \cellcolor{gray!25}        \\ 
                99.5\% & \cellcolor{gray!25} \multirow{-2}{*}{$ 10^{-2} $} &  \cellcolor{gray!25}  \multirow{-2}{*}{$ 10^{-2} $}&  \cellcolor{gray!10} \multirow{-4}{*}{$ 10^{-3} $}     & \cellcolor{gray!25}  \multirow{-3}{*}{$ 10^{-2} $}              \\ 
                \bottomrule
            \end{tabular}%
        }
\end{table}

\hspace{10em}

Across all experimental settings, the values for hyperparameters remain consistent, with the exception of the penalty parameter $\lambda$. 
We report these in Table \ref{tab:hyperparameters_cifar}.
Basic hyperparameters such as learning rate, batch size, weight decay, and momentum are set to standard values commonly used in the literature \citep{kusupati2020soft, ramanujan2020s, liu2019rethinking}.
For \safe-specific hyperparameters, including perturbation radius and dual-update interval, values were optimized on ResNet-20/CIFAR-10, LLaMa-2-7b and applied universally across all settings, with only the penalty parameter $\lambda$ for image classification tasks being adjusted for each setting, which we report in Table \ref{tab:best_lambda}. 
This demonstrates the general applicability of its hyperparameter values across different tasks.

Also, we use a cosine warmup schedule for the penalty parameter for the vision tasks, which increases the penalty parameter from $0$ to the final target value with a cosine curve throughout training iterations.

\subsection{Experimental Details in \cref{sec:exp_sf}} \label{app:exp_sf_detail}
Here we train the standard 3-layer MLPs (with hidden layers of $300$ and $100$) on MNIST.
For the sparsity experiment, we run standard dense training and \safe with target sparsity of $90\%$ and plot the distributions of weights.
For the flatness measurements, we run \safe and ADMM \citep{zhang2018systematic} (a simple non-sharpness-aware baseline) with target sparsity of $90\%$.
We then plot the loss landscape of the found solutions using standard visualization methods \citep{li2018visualizing}, and compute their maximum Hessian eigenvalue (sharpness) using the power iteration method.

\subsection{Experimental Details in \cref{sec:exp_img}} \label{app:exp_img_detail}
We use standard ResNet and VGG architectures, with VGG having batch-norm layers instead of using dropout.
For data augmentation, we used standard techniques such as random cropping and flipping.
We used SGD for the base optimizer for \safe and all baselines.
All CIFAR experiments are conducted using a single or three NVIDIA RTX 3090, where a batch size of $126$ was used in this case.
For CIFAR-10/100, we trained ResNet-20 for 200 epochs, and ResNet-32 and VGG-19 for 300 epochs.

\subsection{Experimental Details in \cref{sec:exp_lang}} \label{app:exp_lang_detail}
Training data is processed following the standard settings in SparseGPT \citep{frantar2023sparsegpt}, where we randomly sample 128 data points with a sequence length of 2048 from the first shard of C4 \citep{raffel2020exploring}.
All experiments were conducted on a single GPU (NVIDIA A6000 or L40S) or HPU (Intel Gaudi2).
We use LLaMa-2-7b-hf, LLaMA-2-13b-hf, and LLaMA-3.1-8 B models from the HuggingFace model hub \citep{wolf2020huggingfacestransformersstateoftheartnatural}, implemented in PyTorch \citep{paszke2019pytorch}. 
For \safe and \safep{}, we perform pruning over 30 epochs using the Adam \citep{kingma2017adammethodstochasticoptimization} optimizer as the base optimizer, with the hyperparameter $\beta$ set to (0.9, 0.95), without weight decay.

\subsection{Implementation and Reproduction Details} \label{app:implementation}
The code to reproduce the results of the paper is provided in \href{https://github.com/LOG-postech/safe-jax}{\texttt{JAX}} \citep{jax2018github, flax2020github} and \href{https://github.com/LOG-postech/safe-torch}{\texttt{PyTorch}} \citep{paszke2019pytorch}.
Specifically, the result of \safe all image classification tasks are produced using \texttt{JAX}, while we used \texttt{PyTorch} for LLMs as it is more abundantly available in \texttt{PyTorch}.
Specifically, all image classification experiments using \safe were conducted in \texttt{JAX}, while \texttt{PyTorch} was used for LLM experiments, due to better support for official implementations and pretrained checkpoints of widely adopted models such as LLaMA.
For all competing methods, we used the official implementations provided by the original authors on GitHub, with default hyperparameters as specified in their respective papers.

\newpage

\section{Detailed Results for Image Classification Tasks} \label{app:add_exp}
We present precise numeric values for \cref{fig:sparisity_val_acc} in \cref{tab:cifar}.

\begin{table}[!ht]
    \centering
    \caption{Validation accuracy of VGG-19 and ResNet-20/32 models pruned with \safe and various baseline methods, trained on CIFAR-10 and CIFAR-100, across different sparsity levels. The results show that \safe generally outperforms the baseline methods across all evaluated sparsity levels, indicating its robustness and effectiveness in maintaining accuracy even under high sparsity conditions.}
    \label{tab:cifar}
    \resizebox{0.9\linewidth}{!}{%
        \def\arraystretch{1.0}
        \centering
        \begin{tabular}{lllccccc}
        \toprule
        &&&\multicolumn{5}{c}{Sparsity}\\
        \cmidrule{4-8}
        Dataset & Model & Method & 90\% & 95\% & 98\% & 99\%& 99.5\% \\ 
        \midrule
        \multirow{9}{*}{CIFAR-10} &
        \multirow{6}{*}{VGG-19} &
            GMP 
                & \alignednum{93.37}[0.09]
                & \alignednum{93.13}[0.12] 
                & \alignednum{93.08}[0.09] 
                & \alignednum{92.70}[0.19] 
                & \alignednum{90.63}[0.14]  \\
            && PBW 
                & \alignednum{93.87}
                & \alignednum{93.57}
                & \alignednum{92.83}
                & \alignednum{90.89}
                & \alignednum{10.00} \\        
            && MLPrune  
                & \alignednum{93.70}
                & \alignednum{93.45}
                & \alignednum{92.48}
                & \alignednum{91.44}
                & \alignednum{88.18} \\
            && LTH
                & \alignednum{93.51}
                & \alignednum{92.92}
                & \alignednum{92.34}
                & - \hspace{1.8em}
                & - \hspace{1.8em} \\
            && ADMM
                & \alignednum{93.86}[0.10]
                & \alignednum{93.62}[0.03]
                & \alignednum{93.58}[0.07]
                & \alignednum{92.54}[0.07]
                & \alignednum{88.53}[0.16] \\
            \rowcolor{gray!10} \cellcolor{white} & \cellcolor{white} 
            & \safe
                & \alignednum{\textbf{94.65}}[0.05] 
                & \alignednum{\textbf{94.44}}[0.08]
                & \alignednum{\textbf{94.05}}[0.05]
                & \alignednum{\textbf{93.93}}[0.17]
                & \alignednum{\textbf{93.56}}[0.02]  \\ 
        \cmidrule{2-8}
        & \multirow{3}{*}{ResNet-20} &
            GMP
                & \alignednum{92.94}[0.08]
                & \alignednum{91.81}[0.14]
                & \alignednum{89.42}[0.17]
                & \alignednum{85.15}[0.19]
                & \alignednum{75.83}[0.47]  \\        
            && ADMM 
                & \alignednum{91.88}[0.05]
                & \alignednum{89.96}[0.27] 
                & \alignednum{86.96}[0.09]
                & \alignednum{82.25}[0.12]
                & \alignednum{73.72}[2.85]\\ 
           \rowcolor{gray!10} \cellcolor{white} & \cellcolor{white} 
            & \safe
                & \alignednum{\textbf{93.44}}[0.01]
                & \alignednum{\textbf{92.59}}[0.09]
                & \alignednum{\textbf{89.58}}[0.1]
                & \alignednum{\textbf{87.47}}[0.07]
                & \alignednum{\textbf{79.55}}[0.13] \\ 
        \midrule
        \multirow{12}{*}{CIFAR-100} &
        \multirow{6}{*}{VGG-19} &
            GMP
                & \alignednum{72.00}[0.06] 
                & \alignednum{71.81}[0.04]
                & \alignednum{69.55}[0.03]
                & \alignednum{66.98}[0.06]
                & \alignednum{62.77}[0.49]  \\
            && PBW 
                & \alignednum{72.41}
                & \alignednum{70.53}
                & \alignednum{58.91}
                & \alignednum{1.00}
                & \alignednum{1.00} \\
            && MLPrune
                & \alignednum{71.56}
                & \alignednum{70.31}
                & \alignednum{66.77}
                & \alignednum{60.10}
                & \alignednum{50.98} \\
            && LTH
                & \alignednum{72.78}
                & \alignednum{71.14}
                & \alignednum{68.95}
                & - \hspace{1.8em}
                & - \hspace{1.8em} \\     
            && ADMM
                & \alignednum{72.93}[0.07]
                & \alignednum{71.17}[0.16]
                & \alignednum{70.02}[0.34]
                & \alignednum{67.23}[0.34]
                & \alignednum{43.40}[0.71] \\
            && \safe \cellcolor{gray!10} 
                & \alignednum{\textbf{73.67}}[0.21] \cellcolor{gray!10} 
                & \alignednum{\textbf{72.83}}[0.13] \cellcolor{gray!10} 
                & \alignednum{\textbf{71.73}}[0.09] \cellcolor{gray!10} 
                & \alignednum{\textbf{70.02}}[0.2] \cellcolor{gray!10} 
                & \alignednum{\textbf{67.23}}[0.19] \cellcolor{gray!10}  \\
        \cmidrule{2-8}
        & \multirow{6}{*}{ResNet-32} &
            GMP 
                & \alignednum{71.69}[0.22] 
                & \alignednum{69.10}[0.24] 
                & \alignednum{65.15}[0.30]
                & \alignednum{58.10}[0.17]
                & \alignednum{42.93}[0.37]  \\  
            && PBW 
                & \alignednum{72.19}
                & \alignednum{68.42}
                & \alignednum{58.23}
                & \alignednum{43.00}
                & \alignednum{20.75}  \\
            && MLPrune 
                & \alignednum{70.33}
                & \alignednum{61.73}
                & \alignednum{37.86}
                & \alignednum{22.38}
                & \alignednum{13.85} \\
            && LTH
                & \alignednum{68.99}
                & \alignednum{65.02}
                & \alignednum{57.37}
                & \alignednum{-}
                & \alignednum{-} \\
            && ADMM 
                & \alignednum{70.85}[0.45]
                & \alignednum{68.74}[0.31] 
                & \alignednum{63.75}[0.06]
                & \alignednum{49.13}[0.22]
                & \alignednum{12.34}[0.73] \\            
            && \safe \cellcolor{gray!10} 
                & \alignednum{\textbf{73.89}}[0.24]\cellcolor{gray!10} 
                & \alignednum{\textbf{72.33}}[0.08]\cellcolor{gray!10} 
                & \alignednum{\textbf{67.74}}[0.24]\cellcolor{gray!10} 
                & \alignednum{\textbf{62.77}}[0.11]\cellcolor{gray!10} 
                & \alignednum{\textbf{51.45}}[0.32]\cellcolor{gray!10}  \\ 
        \bottomrule
        \end{tabular}%
        }
\end{table}

\section{Additional Comparison with SAM-based pruners}

\begin{table}[ht!]
    \begin{minipage}[t]{.6\textwidth}
        \centering
        \caption{Comparison with other variants of CrAM on ResNet-20/CIFAR-10, where we also apply similar techniques to \safe.}
        \label{tab:add_cram_comparison}
        \resizebox{.93\linewidth}{!}{
            \begin{tabular}{llllll}
                \toprule 
                \multicolumn{1}{c}{} & \multicolumn{4}{c}{Sparsity} \\
                Method
                    & \multicolumn{1}{c}{95\%}
                    & \multicolumn{1}{c}{98\%}
                    & \multicolumn{1}{c}{99\%}
                    & \multicolumn{1}{c}{99.5\%}    \\ 
                \midrule
                CrAM
                    & 90.18$_{\pm1.80}$
                    & 69.53$_{\pm12.36}$
                    & 45.17$_{\pm20.86}$
                    & 10.00$_{\pm 0.00}$ \\
                CrAM$^+$ 
                    & 93.62$_{\pm 0.06}$ 
                    & 91.75$_{\pm 0.41}$
                    & 88.82$_{\pm 0.18}$
                    & 81.30$_{\pm 0.56}$ \\
                CrAM$_{\text{Multi}}$ 
                    & 92.21$_{\pm 0.79}$ 
                    & 91.05$_{\pm 0.34}$
                    & 88.83$_{\pm 0.19}$
                    & 84.82$_{\pm 0.27}$ \\
                CrAM$^+_{\text{Multi}}$ 
                    & 92.17$_{\pm 0.91}$
                    & 91.06$_{\pm 0.32}$
                    & 88.98$_{\pm 0.04}$
                    & 84.95$_{\pm 0.38}$ \\
                \midrule
                \rowcolor{gray!10}
                \safe
                    & 92.59$_{\pm 0.09}$
                    & 89.58$_{\pm 0.10}$
                    & 87.47$_{\pm 0.07}$
                    & 79.55$_{\pm 0.13}$ \\ 
                \rowcolor{gray!10}
                \safe{$_{\text{+SG}}$}
                    & 92.40$_{\pm 0.06} $
                    & 90.09$_{\pm 0.13 }$
                    & 89.13$_{\pm 0.06 }$
                    & 85.85$_{\pm 0.09 }$  \\
                \rowcolor{gray!10}
                \safe{$_{\text{Multi}}$}
                    & 91.98$_{\pm 0.13} $
                    & 88.89$_{\pm 0.13 }$
                    & 86.42$_{\pm 0.10 }$
                    & 83.00$_{\pm 0.15 }$  \\
                \rowcolor{gray!10}
                \safe{$_{\text{+SG,Multi}}$} 
                    & 92.17$_{\pm 0.12}$
                    & 90.79$_{\pm 0.04}$
                    & 88.11$_{\pm 0.14}$
                    & 84.92$_{\pm 0.14}$ \\
                \bottomrule
            \end{tabular}%
        }
    \end{minipage}
    \hspace{1em}
    \begin{minipage}[t]{.32\textwidth}
        \centering
        \caption{Comparison with IMP+SAM on LLaMA2-7b for 50\% sparsity.}
        \label{tab:flat_llm_results}
        \vspace{3.5em}
        \resizebox{0.9\linewidth}{!}{
            \begin{tabular}{lc}
                \toprule
                & Perplexity \\
                Method & \texttt{C4 / WikiText} \\
                \midrule
                IMP+SAM & 18.27 / 176.00 \\
                \rowcolor{gray!10} SAFE & \textbf{8.91} / \textbf{6.79} \\
                \bottomrule
            \end{tabular}
        }
    \end{minipage}
\end{table}

\subsection{Other variants of CrAM} \label{app:cram_more}

Here we compare \safe with two additional variants of CrAM introduced in \citet{peste2022cram}: CrAM$_{\text{Multi}}$ and CrAM$^+_{\text{Multi}}$.
Specifically, CrAM$_\text{Multi}$ additionally changes the target sparsity at each iteration chosen randomly from a predefined set, whereas CrAM$^+_\text{Multi}$ combines this with CrAM$^+$ explained in \cref{sec:sam_pruner}.
Similarily to \cref{sec:sam_pruner}, for better comparison, we apply these "$+$" and "$\text{Multi}$" strategy to \safe by adding the gradient computed at compressed point to the sharpness-aware gradient of \safe as $\nabla f(x+\rho\nabla f(x)/\|\nabla f(x)\|) + \underline{\nabla f(C(x))}$, and changing the target sparsity every $z$-updates and for every $\nabla f(C(x))$ in the $x$-update, which we denote as \safe{$_\text{+SG}$}, \safe{$_\text{Multi}$}, and \safe{$_\text{+SG,Multi}$} for applying all both.

As shown in \cref{tab:add_cram_comparison}, \safe achieve competitive performance until $98\%$ sparsity and outperforms CrAM in exterme sparsities, despite improved performance of CrAM from CrAM$_\text{Multi}$ and CrAM$^+_\text{Multi}$.
In particular, while \safe achieves reliable performance without much additional techniques, CrAM, by itself, performs poorly in all sparsities, depending heavily on various auxiliary techniques to drastically improve performance.
This is further supported by \safe{$_\text{+SG}$}, where similar sort of benefits are observed.
However, we find that the strategy of additionally employing gradients from various projected points also generally effective in \safe.
Although these findings are intriguing and merit further study, a comprehensive analysis lies beyond the scope of this work and is left to future research.

\subsection{IMP+SAM on Language model pruning} \label{app:sam_pruner_lang}

We extend the experiment in \cref{sec:sam_pruner} to LLM pruning, where we prune LLama2-7b to 50\% sparsity by applying IMP+SAM to block-wise reconstruction error objective similarly to \safe.
Here, We similarly perform pruning every 5 epochs with sparsity increasing linearly or cubically over the same number of epoch as \safe for fair comparison.
As shown in \cref{tab:flat_llm_results}, \safe outperforms IMP+SAM in LLM tasks similarly to results in image classification.

\section{Computation Cost Analysis LLM Pruning}
We provide theoretical and empirical analysis of the computation costs of \safe and various baselines in LLM pruning.

\begin{table}[ht!]
    \centering
    \begin{minipage}[t]{.68\textwidth}
        \caption{Time complexity analysis for LLM post-pruning techniques}
        \label{tab:llm_time_complexity}
        \centering
        \resizebox{\linewidth}{!}{%
        \begin{tabular}{lll}
            \toprule
             Method & Time Complexity & Explanation \\
             \midrule
             SparseGPT & $\mathcal{O}(L_B(Nd^2+d^3))$ & \makecell[l]{Hessian computation over $N$ samples ($Nd^2$) \\+ Hessian inverse ($d^3$) \\for $L_B$ layers in a single block} \\
             \midrule
             Wanda & $\mathcal{O}(L_B(Nd+d^2))$ & \makecell[l]{Activation norm computation ($Nd$) \\+ weight multiplication ($d^2$) \\for $L_B$ layers in a single block} \\
             \midrule
             ALPS & $\mathcal{O}(L_B(N^2+d^3+kd^3))$ & \makecell[l]{Hessian computation over $N$ samples ($Nd^2$) \\+ eigendecomposition ($d^3$) \\+ penalized inverse for $k$ ADMM iterations ($kd^3$) \\for $L_B$ layers in a single block} \\
             \midrule
             \rowcolor{gray!10} \safe & $\mathcal{O}(L_Bbkd^2)$ & \makecell[l]{Backpropagation through $L_B$ layers in a single block \\for $k$ iterations} \\
             \bottomrule
        \end{tabular}%
        }
    \end{minipage}
    \hspace{1em}
    \begin{minipage}[t]{.22\textwidth}
        \centering
        \caption{Wall-clock time}
        \label{tab:pruning_time}
        \vspace{3.5em}
        \resizebox{0.9\linewidth}{!}{%
            \begin{tabular}{lr}
                \toprule
                Method & Time (\texttt{s}) \\
                \midrule
                Magnitude & 0.48 \\
                Wanda & 3.98 \\
                SparseGPT & 15.82 \\
                ALPS & 788.66 \\
                \rowcolor{gray!10}\safe & 310.68 \\
                \bottomrule
            \end{tabular}
        }
    \end{minipage}
\end{table}

\subsection{Theoretical Time Complexity}
We compare the time complexity for pruning a single transformer block with \safe, SparseGPT, Wanda, and ALPS in terms of hidden dimensions $d$, number of data $N$ or batch size $b$, number of iterations $k$, and number of layers in a single block $L_B$ to observe how the computation cost scales.
The results are given in \cref{tab:llm_time_complexity}.

We observe that while the computation cost of \safe, similarily to ALPS, scales with the iteration, it does not scale with the size of the dataset $N$.
Also, it only scales quadratically with the hidden dimension of the model.
In comparison, SparseGPT and ALPS scale cubically.
Given that scaling model and data sizes are a central strategy in the development of large language models, this highlights the advantage and adequecy of \safe in the current era of large-scale models.

\subsection{Wall-clock Time} \label{app:lang_cost}

We report the wall clock time required for each pruning method on the LLaMA-2-7B model at $50\%$ sparsity. 
Specifically, \cref{tab:pruning_time} shows the time taken to prune the first transformer block consisting of the self-attention and the feed-forward module. 
All measurements were conducted on a single Nvidia A6000 GPU to ensure a fair comparison across pruning methods.

\section{Ablation Study}

We conduct ablation of various hyperparameters of \safe on ResNet-20 trained on CIFAR-10.

\subsection{Effects of Penalty Parameter $\lambda$}

\begin{figure*}[ht!]
    \centering
    \begin{subfigure}[b]{0.775\linewidth}
        \centering
        \includegraphics[width=\linewidth, trim={3.5em 6.3em 2em 0}, clip]{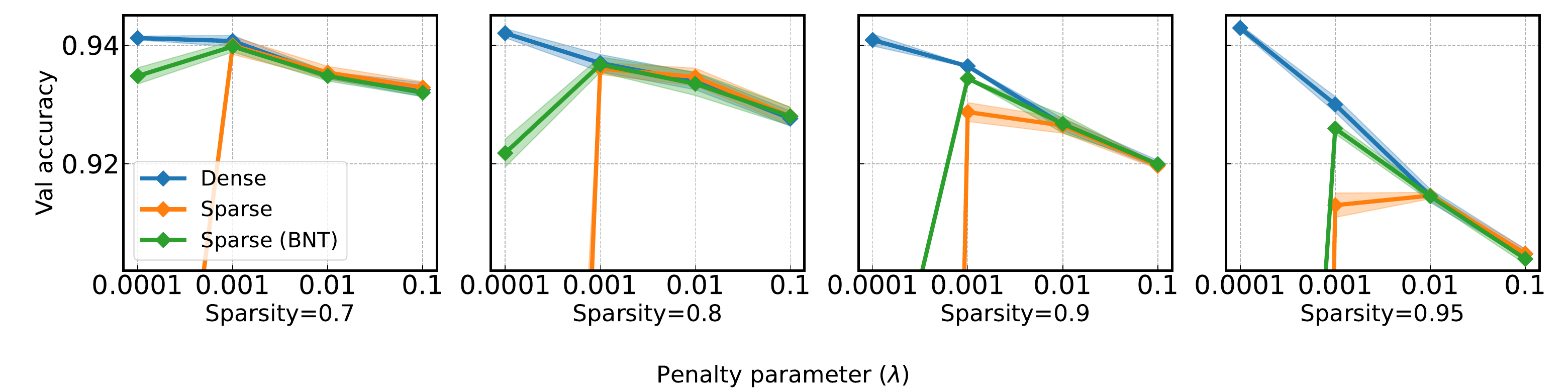}
        \includegraphics[width=\linewidth, trim={3.5em 0 2em 34.7em}, clip]{figures/ablation/lambda/lambda_val_acc_small.pdf}
         \caption{Dense/sparse validation accuracies}
    \end{subfigure}
    \begin{subfigure}[b]{0.214\linewidth}
         \centering
         \includegraphics[width=\linewidth, trim={0.5em -3.8em 0 0}, clip]{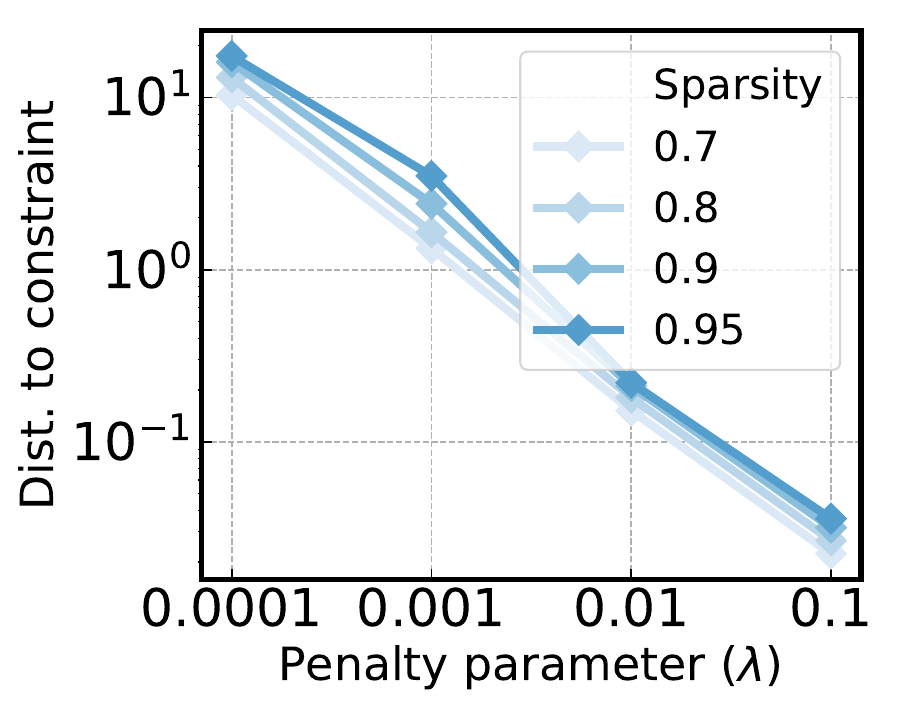}
         \caption{Distance to constraint}
     \end{subfigure}
    \caption{Effect of the penalty parameter $\lambda$ on final validation accuracy of dense/sparsified models (a) and the distance from the constraint (b) over various levels of sparsity. Larger $\lambda$ relieves the performance drop in the final projection step while degrading the performance of the original dense model. Also, BNT provides larger benefits for smaller $\lambda$ and the target sparsity}
    \label{fig:lmda_ablation}
\end{figure*}

We observe how the penalty parameter $\lambda$ impacts various aspects of the final model in terms of the validation accuracy of the final network before and after projection (denoted as dense/sparse model in the legend, respectively) and the distance to the constraint.
We vary $\lambda$ between $\{10^{-4}, 10^{-3}, 10^{-2}, 10^{-1}\}$ and observe this for target sparsity of $\{0.7, 0.8, 0.9, 0.95\}$, with is reported in Figure \ref{fig:lmda_ablation}.
We find that while larger $\lambda$ relieves the performance degradation in the final projection step by pushing the network closer to the sparsity constraint during training, it also degrades the performance of the original dense model. 
This indicates that a balance between objective minimization and constraint satisfaction would yield the best results. 
We also find that the model becomes more sensitive to $\lambda$ for higher target sparsity, where the degree to which large $\lambda$ incurs performance degradation while small $\lambda$ results in failure of the model to get sufficiently close to the constraint becomes much more severe.
This highlights the challenge of training a model with extreme sparsity, which demands a careful balance between minimizing the objective function and satisfying the sparsity constraint.

\subsection{Effects of Batch-norm Tuning}
We also observe how re-evaluating the batch statistics (\ie, batch-norm tuning or BNT) for the final projected parameters impacts the performance of sparse networks over various sparsities.
Our assumption is that it should be helpful when the parameters change greatly after projection, which will cause the hidden features to deviate from the statistics computed during training.
The results are presented in Figure \ref{fig:lmda_ablation}.
We observe that the benefits of BNT are pronounced when $\lambda$ is small, where the distance to the constraint is the largest, fitting our assumption.
However, we also observe that it fails to recover the performance in higher sparsity, despite having a similar distance to the constraint.
We suspect it is due to the degraded quality of the sparse network having more impact on the performance in high sparsity, rather than the misalignment of the batch-norm statistics.

\subsection{Effect of $\lambda$ Scheduling} \label{app:sched_abl}

\begin{figure}[ht!]
    \begin{subfigure}[b]{0.245\linewidth}
        \centering
        \includegraphics[width=\linewidth,trim={1em 1em 0em 1em},clip]{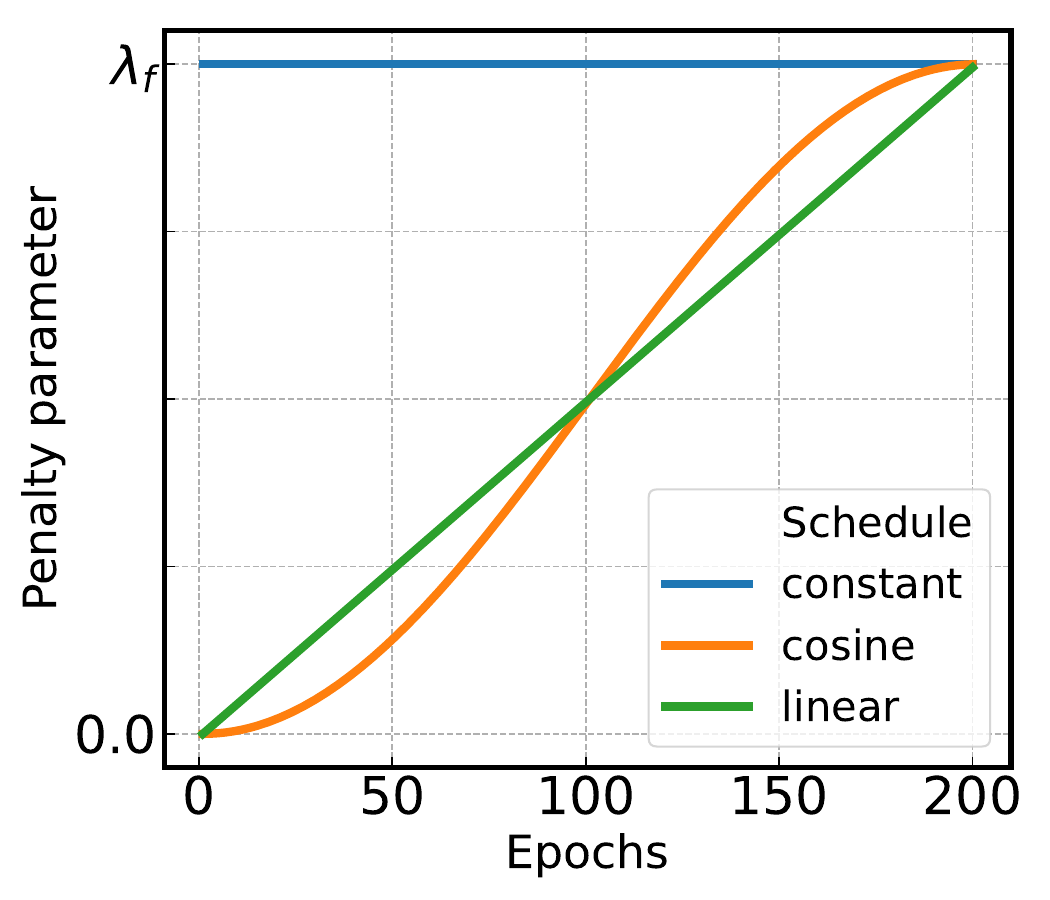}
        \caption{Schedules}
    \end{subfigure}
    \begin{subfigure}[b]{0.245\linewidth}
        \centering
        \includegraphics[width=\linewidth,trim={1em 1em 0em 1em},clip]{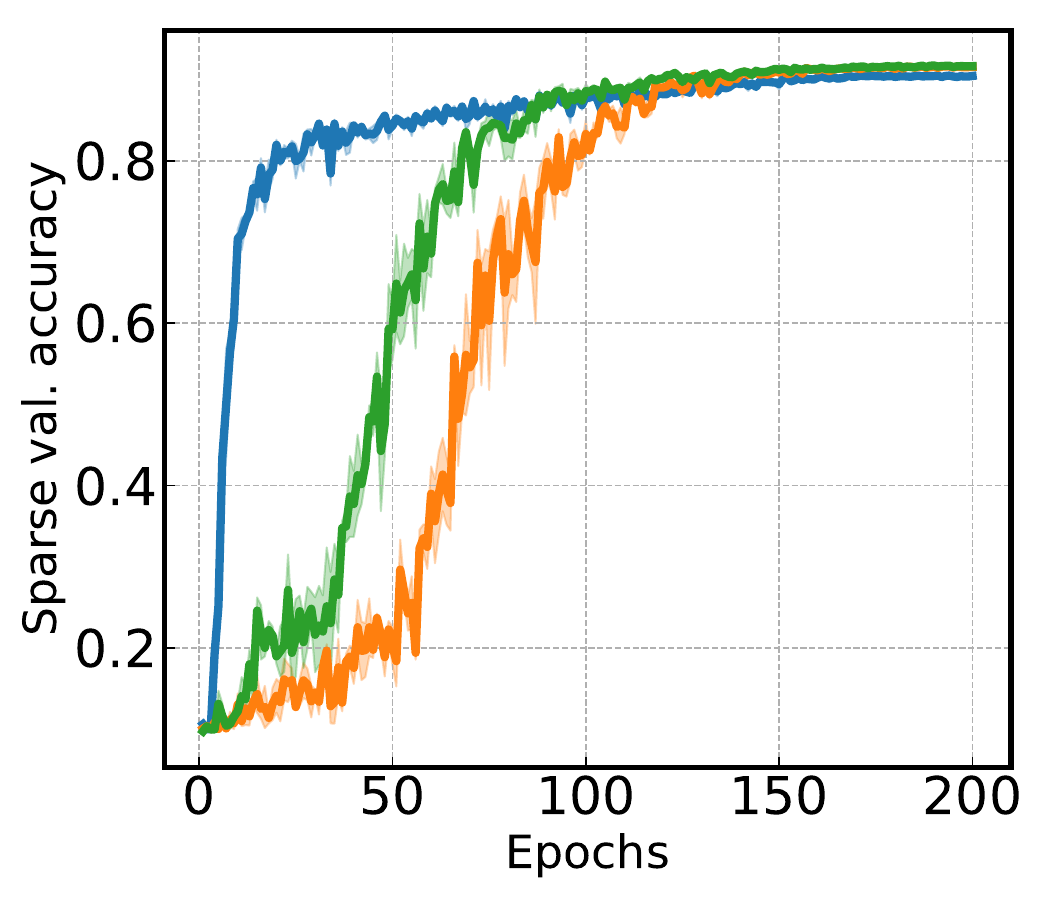}
        \caption{Sparse val. acc.}
    \end{subfigure}
    \begin{subfigure}[b]{0.245\linewidth}
        \centering
        \includegraphics[width=\linewidth,trim={1em 1em 0em 1em},clip]{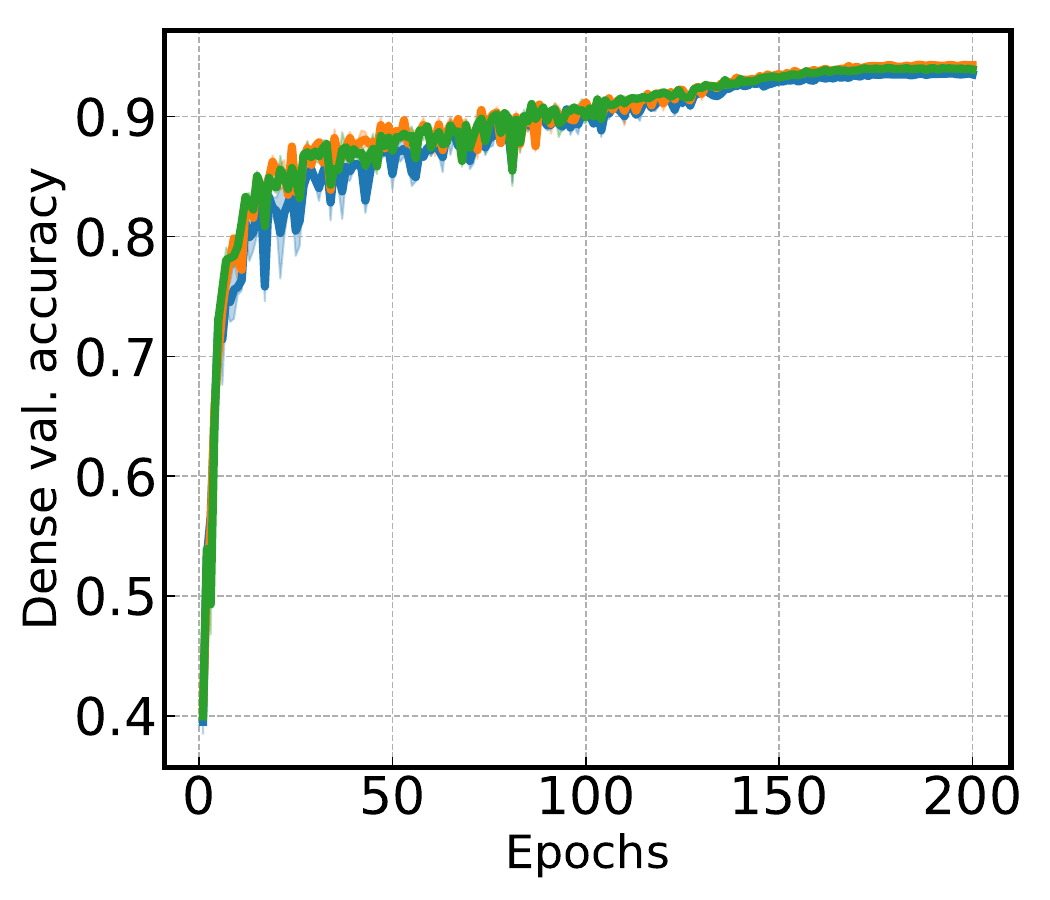}
        \caption{Dense val. acc.}
    \end{subfigure}
    \begin{subfigure}[b]{0.245\linewidth}
        \centering
        \includegraphics[width=\linewidth,trim={1em 1em 0em 1em},clip]{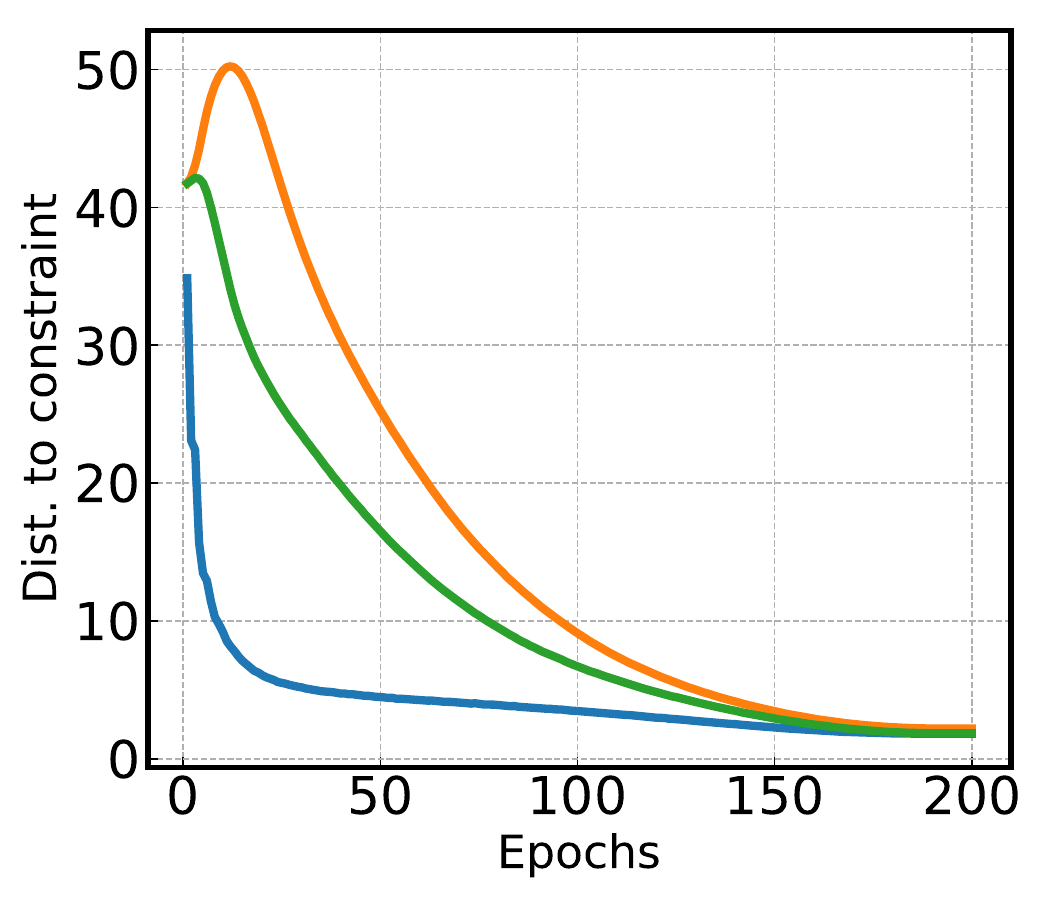}
        \caption{Distance to constraint}
    \end{subfigure}
    \caption{Effects of different choices of penalty parameter schedules (a) on validation accuracy of sparsified/dense network (b-c) and the distance to the target sparsity constraint (d) over the training process of ResNet-20/CIFAR-10 using \safe on $95\%$ sparsity. It is observed that scheduling yields better performance, seemingly allowing the network to move away from the constraint in the initial phases to focus more on training, which potentially underscores its impact on securing performance.}
    \label{fig:lambda_schedule}
\end{figure}

\bgroup
\def\arraystretch{1.3}
\begin{table}[ht!]
\caption{Validation accuracy for various penalty parameter $\lambda$ schedules on ResNet-20 trained on CIFAR-10. The use of scheduling generally improves the final accuracy of the sparse network, especially at extreme sparsity.}
\label{tab:lambda_schedule}
\centering
\resizebox{0.5\linewidth}{!}{%
\begin{tabular}{llcccc}
\toprule 
\multicolumn{5}{c}{Sparsity}\\
 &  70\%               & 80\%               & 90\%  & 95\%       \\ \midrule
     Constant
        & $ 93.79 _{\pm 0.16 }$ 
        & $ 93.33_{\pm 0.10 }$
        & $ 92.13_{\pm 0.13 }$
        & $ 90.78_{\pm 0.16}$\\
    \midrule
    Linear
        & $ 93.78 _{\pm 0.04 }$
        & $ 93.66 _{\pm 0.17 }$
        & $ 93.06 _{\pm 0.04 }$
        & $ 92.20 _{\pm 0.01}$\\
    Cosine
        & $ 93.98_{\pm 0.09 }$
        & $ 93.67 _{\pm 0.13 }$
        & $ 93.44 _{\pm 0.01 }$
        & $ 92.59 _{\pm 0.09}$\\
    \bottomrule
\end{tabular}%
}

\end{table}
\egroup

There are many strategies employed by the community known to boost performance for deep neural network training.
One such strategy is scheduling, which has been a de facto for important hyperparameters such as learning rate \citep{GoodBengCour16}.
In particular, gradual sparsity schedules \citep{zhu2017prune, benbaki2023fast} have been widely adopted to enforce less sparsity in the initial phases of training.
Here we observe whether a similar effect can be transferred to the penalty parameter $\lambda$, an important parameter for controlling how strongly to push toward the sparsity constraint at any point during training.
Precisely, we test whether a slow increase from zero to the targeted penalty $\lambda_f$ will yield improvements over constant $\lambda$.
We train ResNet-20 on CIFAR-10 to $\{70\%, 80\%, 90\%, 95\%\}$ sparsity with \safe using linear, cosine schedules, and constant $\lambda$, and observe how this affects the final accuracy of the sparse network in \cref{tab:lambda_schedule}.
We find that scheduling consistently yields overall higher accuracy, especially in extreme sparsity where it yields $+2\%$ increase.

To gain further insight as to how this occurs, we analyze how scheduling affects various aspects of training through the validation accuracy of the nearest sparse network (sparse val. acc.), the validation accuracy of the dense network (dense val. acc.), and the distance to the constraint in \cref{fig:lambda_schedule}.
Here, we observe that while constant penalty pushes the network drastically close to sparsity in the initial stages, scheduling allows the network to temporarily stray away from the constraint.
This seems to highlight that the initial phase of training is important for securing the performance of the final sparse network.

\subsection{Effects of dual-update interval $K$}

\begin{figure}[ht!]
    \centering
    \begin{subfigure}[b]{0.3\linewidth}
        \centering
        \includegraphics[width=\linewidth]{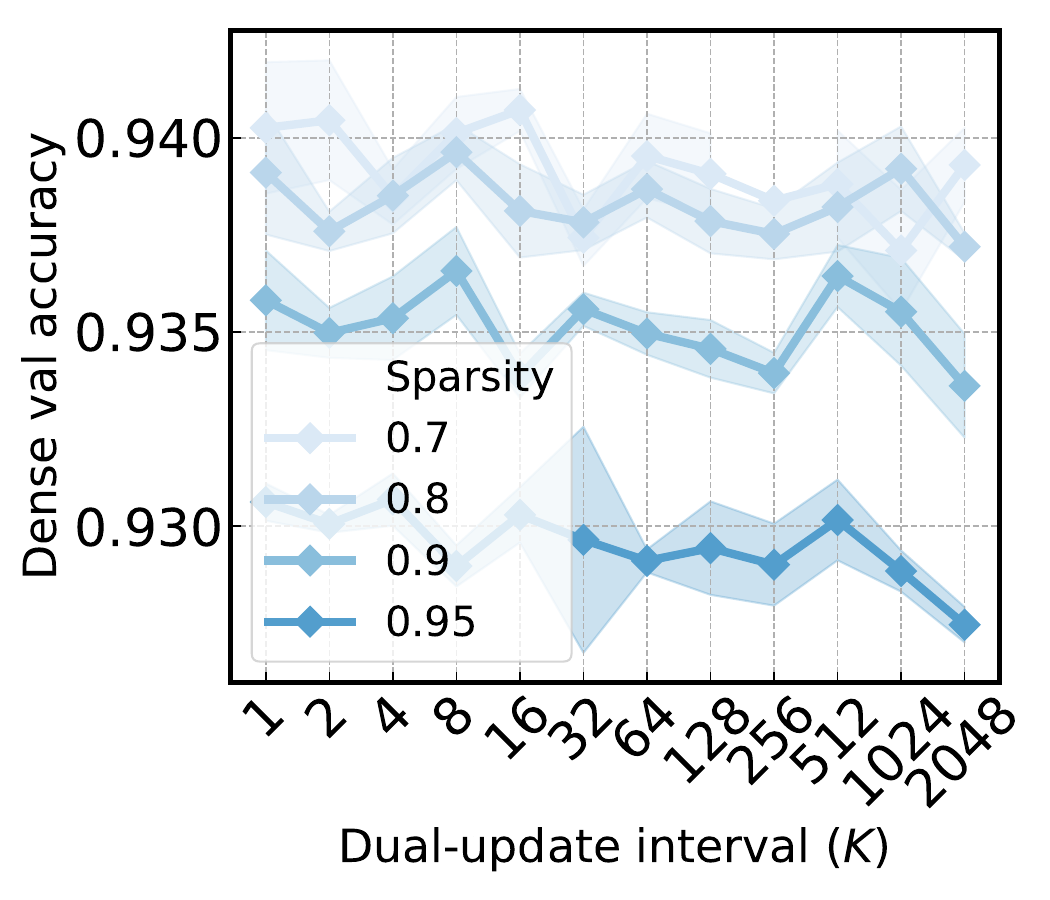}
        \caption{Dense validation accuracy}
    \end{subfigure}
    \begin{subfigure}[b]{0.3\linewidth}
        \centering
        \includegraphics[width=\linewidth]{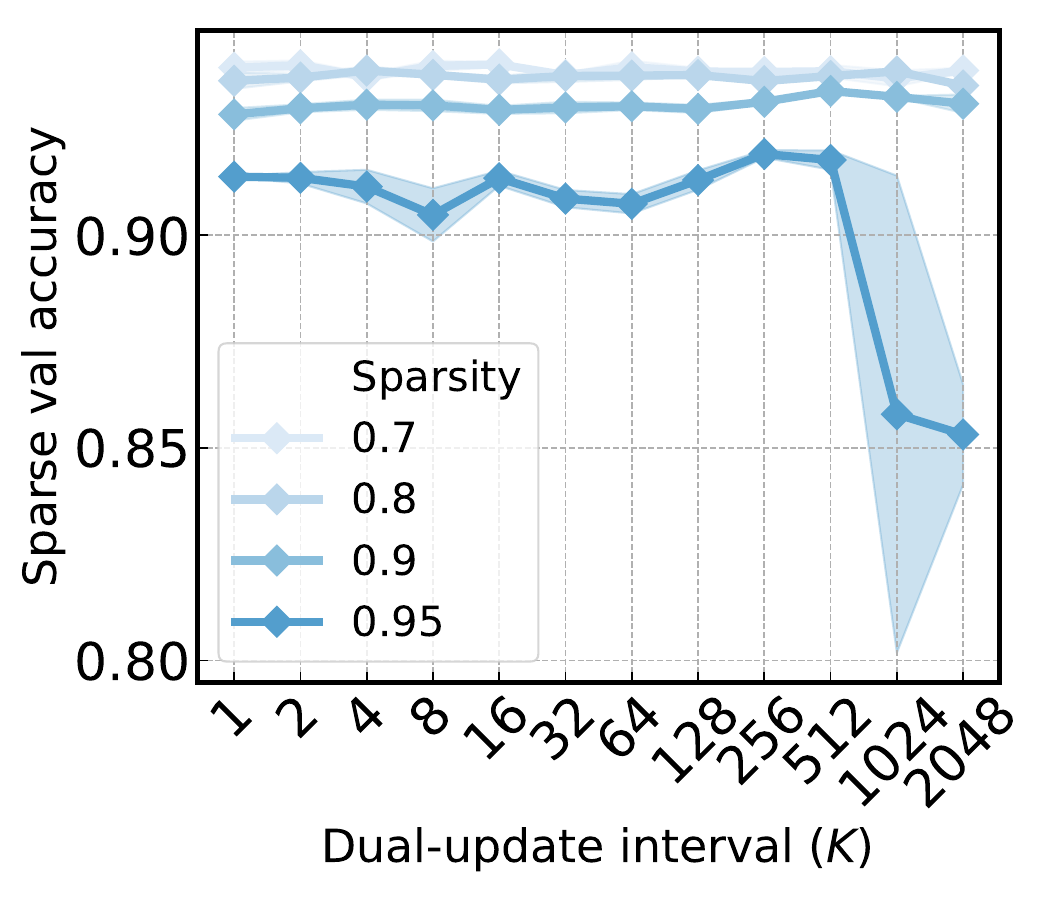}
        \caption{Sparse validation accuracy}
    \end{subfigure}
    \begin{subfigure}[b]{0.3\linewidth}
        \centering
        \includegraphics[width=\linewidth]{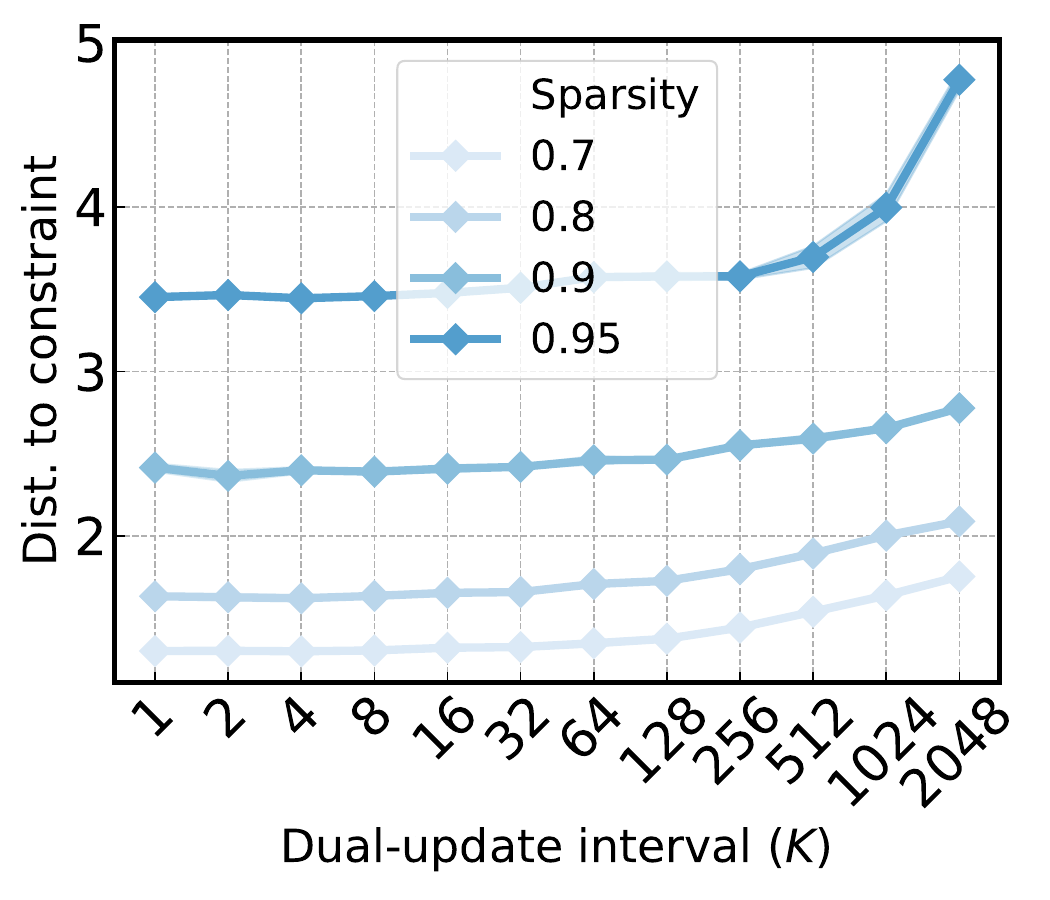}
        \caption{Distance to constraint}
    \end{subfigure}
    \caption{Effect of the dual-update interval $K$ on final validation accuracy of dense/sparsified models (a, b) and the final distance from the constraint (c) over various levels of sparsity. In our search range, $K$ has little impact on accuracy and distance to the constraint. However, in target sparsity of $95\%$, large $K$ fails to sufficiently push the network towards the sparsity constraint, resulting in performance degradation on the sparsified network.}
    \label{fig:interval_ablation}
\end{figure}

We observe how the dual-update interval $K$ impacts the final validation accuracy before and after projection (denoted as dense/sparse val accuracy, respectively) and the distance to the constraint in Figure \ref{fig:interval_ablation}.
We find that within our search range, $K$ has little impact on the final network.
However, in the case of target sparsity of $95\%$, we can observe that large $K$ fails to sufficiently push the network towards the sparsity constraint, resulting in performance degradation of the sparsified network.
We can expect this trend to appear for all sparsity levels under increasing values of $K$, since this will result in less execution of dual ascent iteration for constraint satisfaction.

\end{document}